\newtheorem{lemma}{{\bf Lemma}}
\newtheorem{proposition}{{\bf Proposition}}
\newtheorem{definition}{{\bf Definition}}
\newcommand{\po}[1]{{\color{black} #1 }}
\newcommand{\ngo}{\mathrm{NGO}}
\newcommand{\E}{\mathbb{E}}
\newcommand{\D}{\mathcal{D}}
\renewcommand{\d}{\mathrm{d}}
\title{Base Models for Parabolic Partial Differential Equations}
\author[1]{\href{mailto:<xingzi.xu@duke.edu>?Subject=Your UAI 2024 paper}{Xingzi~Xu}\thanks{Equal contribution.}{}}
\author[1,2]{\href{mailto:<ali.hasan@duke.edu>?Subject=Your UAI 2024 paper}{Ali~Hasan}\textsuperscript{*}\thanks{Corresponding author.}{}}
\author[3]{\href{mailto:<dingj@umn.edu>?Subject=Your UAI 2024 paper}{Jie~Ding}{}}
\author[1]{\href{mailto:<vahid.tarokh@duke.edu>?Subject=Your UAI 2024 paper}{Vahid~Tarokh}{}}
\affil[1]{%
    Department of Electrical and Computer Engineering\\
    Duke University\\
    Durham, North Carolina, USA
}
\affil[2]{%
    Machine Learning Research\\
    Morgan Stanley
}
\affil[3]{%
    School of Statistics\\
    University of Minnesota\\
    Minneapolis, Minnesota, USA
}
\begin{document}
\maketitle

\begin{abstract}
Parabolic partial differential equations (PDEs) appear in many disciplines to model the evolution of various mathematical objects, such as probability flows, value functions in control theory, and derivative prices in finance. 
It is often necessary to compute the solutions or a function of the solutions to a parametric PDE in multiple scenarios corresponding to different parameters of this PDE.
This process often requires resolving the PDEs from scratch, which is time-consuming.
To better employ existing simulations for the PDEs, we propose a framework for finding solutions to parabolic PDEs across different scenarios by meta-learning an underlying base distribution.
We build upon this base distribution to propose a method for computing solutions to parametric PDEs under different parameter settings.
Finally, we illustrate the application of the proposed methods through extensive experiments in generative modeling, stochastic control, and finance.
The empirical results suggest that the proposed approach improves generalization to solving new PDEs. 
\end{abstract}

\section{Introduction}
In this work, we propose and study a particular type of neural structure that can adapt rapidly to tasks associated with parabolic partial differential equations (PDEs).
Parabolic PDEs are a standard mathematical framework for describing the evolution of different processes.
Applications are evident in various fields, such as probabilistic modeling and mathematical finance~\citep{pardoux2014stochastic}.
For instance, the probability density functions (PDFs) in generative modeling with diffusion processes satisfy a parabolic PDE known as the Fokker-Planck equation. 
In continuous stochastic control problems, the optimal policy satisfies the Hamilton-Jacobi-Bellman equation.
In finance, the Black-Scholes equation describes the price of a derivative. 
Due to the numerous applications, there is a general need to describe such processes under different \emph{boundary conditions} and \emph{parameters}.
In the probabilistic modeling example, we may consider a scenario where we wish to sample from multiple related distributions while using their shared features to accelerate training. 
In this case, different diffusion processes can correspond to distinct parameters of the Fokker-Planck equation.

Parabolic PDEs have a particular structure that allows for efficient computation of their solution in high dimensions using Monte Carlo techniques~\citep{pardoux2014stochastic}. 
The Feynman-Kac formula formalizes this for linear parabolic PDEs through a connection between an expectation over sample paths and the solution to the corresponding PDE~\citep[Chapter 7.7]{sarkka2019applied} with the nonlinear extension following a similar argument~\citep{fahim2011probabilistic}. 
However, the formula requires sampling solution paths of a stochastic differential equation (SDE) with parameters corresponding to the PDE. 
This process can be time-consuming due to the large number of sequential samples needed, and it requires sample paths to be~\emph{resampled} for different parameters of the PDE. 
\begin{figure*}
    \centering
    \begin{subfigure}[t]{0.45\textwidth}
    \centering
    \includegraphics[width=\textwidth, trim=100pt 40pt 10pt 10pt]{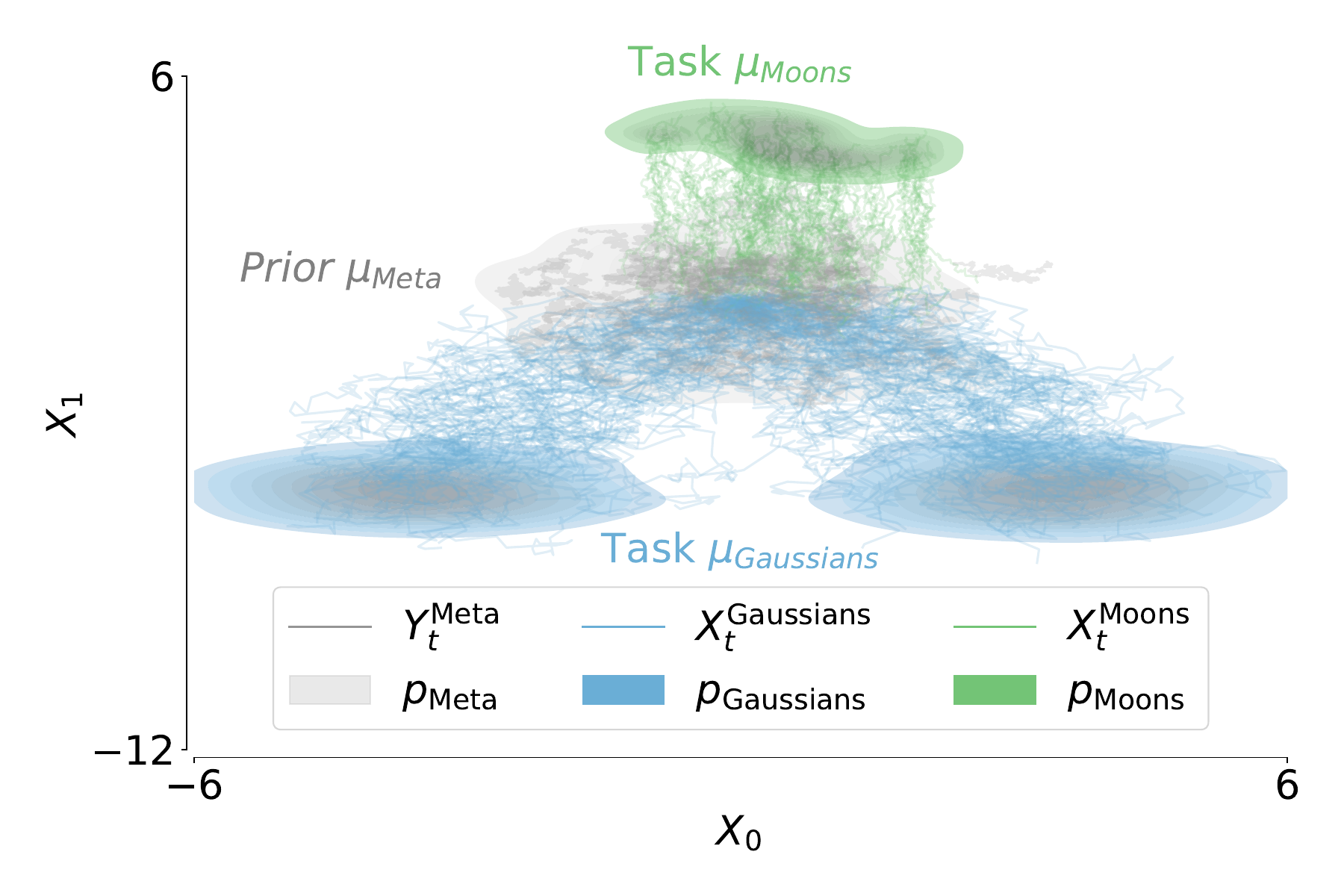}
    \caption{Maximizing likelihoods of target distributions $p_\text{Gaussians}, p_\text{Moons}$ using a shared base distribution $p_\text{Meta}$.}
    \label{fig:schem_gen}
    \end{subfigure}
    \begin{subfigure}[t]{0.45\textwidth}
    \centering
    \includegraphics[width=\textwidth, trim=30pt 40pt 80pt 10pt]{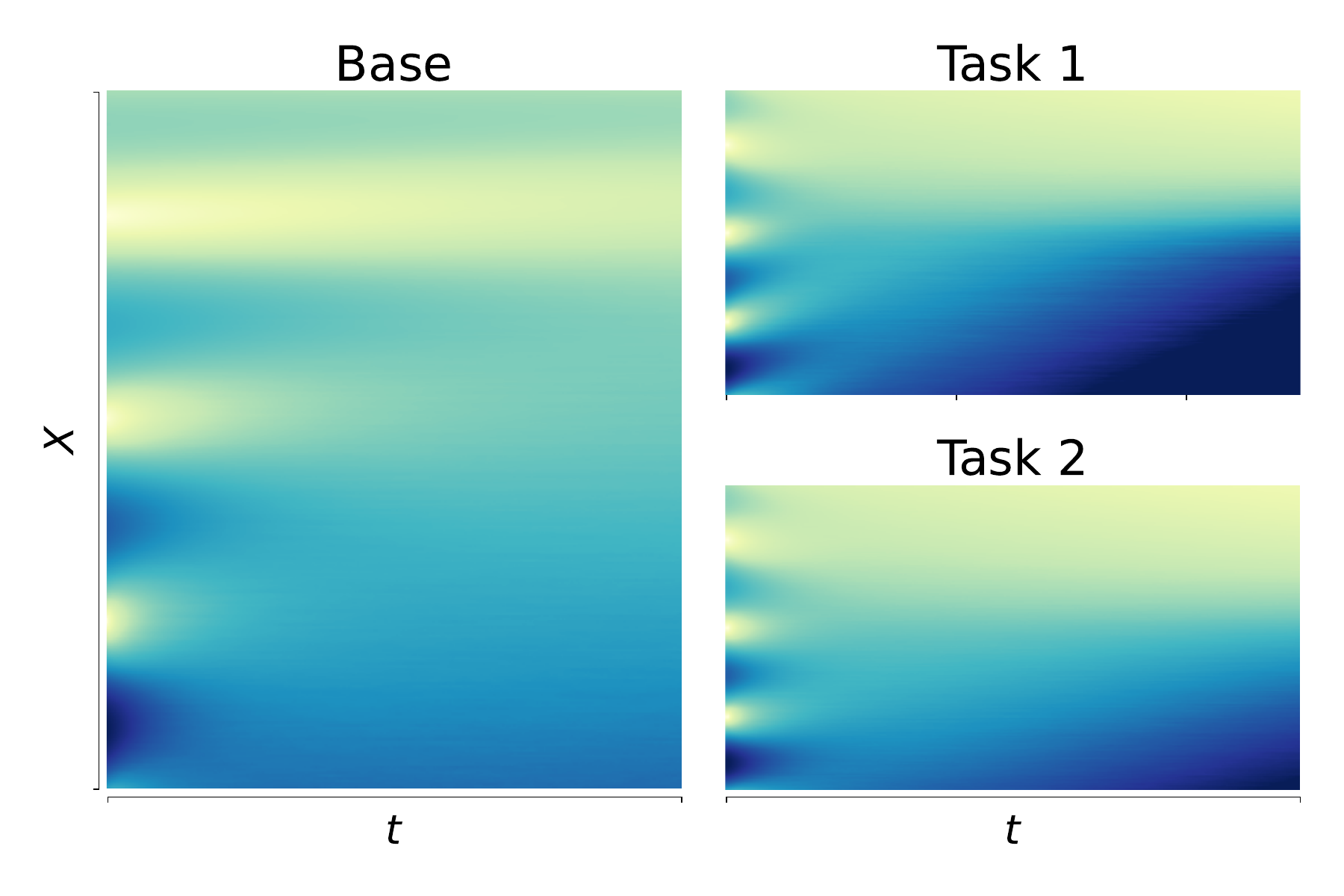}
    \caption{Solving PDEs with different parameters (Task 1 and 2) by reusing the base solution.}
    \label{fig:schem_pde}
    \end{subfigure}
    \caption{
    Schematic of the proposed procedures for maximizing functions~(\ref{fig:schem_gen}) and solving PDEs with different parameters~(\ref{fig:schem_pde}).
    In both scenarios, we reuse a meta-learned base parameterization across different tasks. 
    (\ref{fig:schem_gen}) illustrates sampling target densities using a task-specific diffusion for each density.
    (\ref{fig:schem_pde}) illustrates two solutions to linear parabolic PDEs simulated with the same stochastic process based on the proposed method.} 
    \label{fig:schematic}
\end{figure*}
We instead build upon this idea to approximate the PDE solutions by recycling Monte Carlo samples used in computing the solutions while mitigating the instabilities inherent in the direct application of importance sampling.
This idea leads to a base model (or meta model) that can swiftly adapt and extend to different tasks (e.g., solving different PDEs or sampling distinct target distributions). 
We borrow techniques from two research disciplines: the first related to meta-learning and the second related to operator learning~\citep{Finn2017ModelAgnosticMF,Lu2019LearningNO,jin2022minimax}. 
Rooted from meta-learning, we want to modify the base model to unseen tasks with lower training effort. 
Related to operator learning, we want to define an operator using the base model that maps parameters and boundary conditions to solutions.
The outline of the paper is as follows:
We first describe the importance sampling framework in section~\ref{sec:change_param}; we then discuss the issues of directly applying importance sampling and introduce the proposed meta-learning approach in section~\ref{sec:pde_girsanov}.
In section~\ref{sec:max}, we describe a lower bound for applying the framework to maximization tasks, such as maximizing the likelihood in generative modeling.

In Section~\ref{sec:ngo}, we generalize the computation for maximization tasks and present a neural operator that allows computing solutions to PDEs at a reduced cost. 
Figure~\ref{fig:schematic} illustrates an overview of these concepts where: on the left, we apply the proposed framework to sample from different target distributions corresponding to $p_\text{Moons}, p_\text{Gaussians}$ using a shared base model $p_\text{Meta}$; and, on the right, we apply the operator framework to solve two parabolic PDEs (Task 1 and 2) with different parameters by reusing the original simulated stochastic process (Base). \footnote{Github repository to this paper is: \url{https://github.com/XingziXu/base_parabolic.git}}

\paragraph{Contributions} 
To summarize, our main contribution is a framework for establishing a base model for reuse across problems associated with parabolic PDEs. 
To that end, 
\begin{itemize}
    \item We describe the base model and a corresponding meta-learning framework for solving maximization problems associated with parabolic PDEs;
    \item We use the meta-learning framework and propose a neural operator to approximate the solution of parabolic PDEs under different parameter settings; 
    \item We analyze the convergence of the operator over parametric classes of functions; 
    \item We evaluate the methods in different experimental settings, including synthetic and real-world examples.
\end{itemize}

\subsection{Related work}

We will discuss the related work on meta-learning techniques for PDEs and operator-learning methods for PDEs.  

\citet{ PSAROS2022111121, Chen2022MetaMgNetMM, Liu2022ANM} consider applying the MAML framework to physics-informed neural networks (PINNs)~\citep{raissi2019physics}.
In these approaches, the authors learn a meta-parameterization of PDEs for efficient optimization to estimate PDEs under new parameters.

\citet{huang2022meta} considered a latent variable model fine-tuned to obtain the solution at different regions of the parameter space and domain based on the PINNs framework.
\citet{Chen2022MetaMgNetMM} uses hypernetworks for multi-task learning with parameterized PDEs, which also focuses on low-dimensional settings.

On the second front are neural operator architectures used to solve PDEs.
Operator-learning methods map sets of parameters to solutions of PDEs by estimating the operator that describes this transformation.
The work of~\citet{Lu2019LearningNO} proposed DeepONet, which uses the result that a neural network with a single hidden layer can approximate any nonlinear continuous operator, and follow-up work of~\citet{Li2020FourierNO} considers learning a mapping in  Fourier space with improved empirical performance. 
\citet{gupta2021multiwavelet, Wang2021LearningTS, Li2022FourierNO,Zheng2022FastSO} propose additional variants on this operator framework.
However, these methods require first generating a dataset of parameter and solution pairs by solving the PDE according to standard techniques, which can be expensive.
\citet{Wang2021LearningTS} attempts to circumvent this issue by including the PINN loss in the operator learning framework. 
Additionally, \citet{Hu2023TacklingTC} considers learning PINNs for high-dimensional PDEs by decomposing a gradient of PDEs into pieces corresponding
to different dimensions.
However, various issues associated with optimization persist with the PINNs framework, e.g., in~\citet{krishnapriyan2021characterizing, wang2021understanding}, making its application difficult with certain PDEs.
Finally, several methods consider the stochastic representation used in this paper.
\citet{berner2020numerically,richter2022robust,glau2022deep,zhang2023monte} consider various forms of regression-based techniques wherein the stochastic representation is repeatedly sampled for different parameter values and regressed to a sufficiently expressive function approximator (usually a neural network).
However, this incurs significant training costs due to the required sampling at each iteration in training and a feature that the proposed methods in this work explicitly circumvents.
\citet{Han2018SolvingHP} consider solving semi-linear parabolic PDEs by regressing the stochastic representation of the solution to the neural network.  
A hybrid approach that considers both PINN losses and the stochastic representation of PDEs was considered in~\citet{nusken2021interpolating} for the single parameter case.
These methods, however, only provide solutions for PDEs for one particular set of parameters rather than a family of parameters.

To address these gaps in the literature, including the lack of scalability to high dimensions, requiring many solution pairs for training, and requiring computing new solutions for different parameters, our framework exploits the stochastic representation of parabolic PDE with an importance sampling technique to transform the learned solution to one with distinct parameters.

\section{Stochastic representations of parabolic PDEs} 
We will now describe the key ingredients of the proposed method and how to apply them in the proposed framework. 
Throughout the paper, we will refer to the base model interchangeably as a meta model.

\subsection{Parabolic PDEs}
\label{sec:background}
Consider a domain $\mathcal{D} \subset [0, T] \times \mathbb{R}^d$ such that the solution $p(t,x) : \D \to \mathbb{R}$ of the PDE is defined. 
A linear parabolic PDE is of the form
\begin{align}
    \nonumber 
    &\frac{\partial p}{\partial t}(t,x) = \nabla p(t,x)\cdot \mu(t,x) \\
    & +\frac12\text{Tr}(\sigma\sigma^\top(t,x)(\text{Hess}_{x} p)(t,x)) -r(t,x)p(t,x),
\label{eq:parabolic}
\end{align}
with boundary condition $p(0,x)=p_0(x)$.
The function $\mu(t, x) : \D \to \mathbb{R}^d$ is referred to as the \emph{drift function} and the function $\sigma(t, x) : \D \to \mathbb{R}^{d \times d}_+$ where $\sigma\sigma^\top$ is semi positive definite for all elements in $\D$ is referred to as the \emph{volatility function}. 
The function $r(t,x) : \mathcal{D} \to \mathbb{R}$ is sometimes referred to as the growth term. 
For all functions, we require the usual conditions on $\mu, \sigma, r$ such that~\eqref{eq:parabolic} is uniformly parabolic (e.g. see~\citet[Chapter 7.1]{evans2022partial}).
$\mathrm{Hess}_x p(x,t)$ denotes the Hessian of $p$ with respect to $x$. 
For the remainder of the text, we will focus on the interplay between the functions $\mu, \sigma,$ and $p_0$, as we are interested in how they can be easily updated to solve different tasks.

\subsection{Feynman-Kac method}
\label{sec:fkmethod}
Solving~\eqref{eq:parabolic} in high dimensions generally requires using a Monte Carlo method to alleviate the curse of dimensionality. 
In particular, the Feynman-Kac method provides such a mechanism:
\begin{lemma}[Feynman-Kac method \cite{sarkka2019applied}]
Let $X_t$ satisfy the following It\^o diffusion:
$
\mathrm{d}X_t = \mu(t, X_t) \mathrm{d}t + \sigma(t, X_t)\mathrm{d}W_t.
$
Then, the solution of the PDE in~\eqref{eq:parabolic} is 
\begin{equation}
\label{eq:fk}
    p(t, x) = \mathbb{E}\left[\po{p_0}(X_t) \exp\left (-\int_0^t r(s,X_s) \mathrm{d}s \right) \bigg | \; X_0 = x \right].
\end{equation}
\end{lemma}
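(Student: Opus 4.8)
The plan is to run the classical Feynman--Kac verification: recast the claim as the statement that an exponentially weighted functional of the diffusion is a martingale, and read off the required identity from its drift. Fix $(t,x)\in\D$. Since~\eqref{eq:parabolic} runs forward in time while~\eqref{eq:fk} evaluates $p_0$ at the horizon $t$, I would first arrange that the time argument of $p$ and the running time of the SDE are compatible --- e.g. by passing to the space-time diffusion $\hat X_s=(s,X_s)$, or, when $\mu,\sigma,r$ are autonomous (as in the standard statement of the cited result), directly. With that reduction in place, introduce for $s\in[0,t]$ the process
\[
M_s \;:=\; p(t-s,\,X_s)\,\exp\!\Big(-\!\int_0^s r(u,X_u)\,\mathrm{d}u\Big), \qquad X_0=x,
\]
and note that $M_0 = p(t,x)$ while $M_t = p_0(X_t)\exp\!\big(-\!\int_0^t r(u,X_u)\,\mathrm{d}u\big)$. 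It therefore suffices to prove $\E[M_t]=\E[M_0]$, i.e. that $M$ is a true martingale.

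Next I would apply It\^o's formula to $M_s$, differentiating the (assumed $C^{1,2}$) function $p$ along $\mathrm{d}X_s=\mu\,\mathrm{d}s+\sigma\,\mathrm{d}W_s$ and using the product rule with the finite-variation discount factor. Collecting the $\mathrm{d}s$ terms produces a drift proportional to $\big[-\partial_t p + \nabla p\cdot\mu + \tfrac12\mathrm{Tr}(\sigma\sigma^\top\mathrm{Hess}_x p) - r\,p\big]$ evaluated at $(t-s,X_s)$, which vanishes identically precisely because $p$ solves~\eqref{eq:parabolic}. Hence $\mathrm{d}M_s = \exp\!\big(-\!\int_0^s r\,\mathrm{d}u\big)\,\nabla p(t-s,X_s)^\top\sigma(s,X_s)\,\mathrm{d}W_s$, so $M$ is a local martingale; under the standing assumptions on $\mu,\sigma,r,p_0$ (moment bounds on $X_s$ via a Gr\"onwall estimate, a lower bound on $r$ that keeps the discount factor bounded, and a growth bound on $\nabla p$) the stochastic integrand is square-integrable, which upgrades $M$ to a genuine martingale. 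Taking expectations gives~\eqref{eq:fk}, and the boundary condition $p(0,x)=p_0(x)$ is recovered as the special case $t=0$.

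The main obstacle is not the It\^o computation itself but supplying its hypotheses. First, one must know a priori that the solution $p$ is $C^{1,2}$ in $x$ (and $C^1$ in $t$) with enough integrability to pass from a local to a true martingale: this is exactly where uniform parabolicity enters, through classical interior Schauder / parabolic regularity estimates for~\eqref{eq:parabolic}, and I would invoke this rather than reprove it. Second, the time-dependence of $\mu,\sigma,r$ needs careful bookkeeping, since the reversed argument $t-s$ inside $p$ does not line up with the running time $s$ inside the SDE coefficients; pinning this down cleanly (space-time lift, terminal-value reformulation on $[0,t]$, or the autonomous reduction) is the step I would settle before running the argument above. If instead one only wishes to \emph{verify} that the right-hand side of~\eqref{eq:fk} defines a solution, one additionally needs the joint regularity of $(t,x)\mapsto\E[\,\cdot\,]$, which again rests on the parabolicity hypothesis.
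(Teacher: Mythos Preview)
Your proposal is the standard and correct verification argument for Feynman--Kac; the It\^o computation and the identification of the drift with the PDE operator are exactly right, and you have correctly flagged the two genuine technical points (upgrading the local martingale and handling the time-reversal bookkeeping). There is nothing to compare against, however: the paper does not prove this lemma at all but simply cites it from \citet{sarkka2019applied}, so your write-up already goes well beyond what the paper provides.
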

While the Feynman-Kac method applies to various types of parabolic equations, we present it in the case of solving Equation~\eqref{eq:parabolic} with an initial condition to simplify the exposition. 

\subsection{Changing parameters}
\label{sec:change_param}
From~\eqref{eq:fk}, changing the parameters of the PDE requires sampling new sample paths of $X_t$. 
We aim to decrease this sampling burden by reusing the existing samples for different parameters. 
To do this, we consider an application of Girsanov's theorem:

\begin{definition}[Likelihood Ratio~\citep{sarkka2019applied}]
\label{def:girsanov}
Let $X_t, Y_t$ be two It\^o diffusions satisfying
\begin{align*}
    \mathrm{d}X_t &= \mu^{(1)}(t, X_t) \mathrm{d} t+ \sigma (t)\mathrm{d} W_t, \\
    \mathrm{d}Y_t &= \mu^{(2)}(t, Y_t) \mathrm{d}t + \sigma (t)\mathrm{d}W_t
\end{align*}
with laws $\mathbb{P}_{X_t}, \mathbb{P}_{Y_t}$. 
Define $\Sigma = \sigma\sigma^\top$ and $\delta\mu(s,x) = \mu^{(1)}(s,x) - \mu^{(2)}(s,x)$ and assume $\E \left [ \frac12 \int_0^t \delta \mu(s, x) \d s\right ] < \infty$.
Then, the \emph{exponential martingale} gives the likelihood ratio of the two processes: 
\begin{align}
   \nonumber  \frac{\mathrm{d}\mathbb{P}_{X_{t}^{(1)}}}{\mathrm{d}\mathbb{P}_{Y_{t}^{(2)}}} := \exp \biggl(- \frac12& \int_0^t \delta\mu(s,x)^\top \Sigma^{-1} \delta\mu(s,x) \mathrm{d}s  \\
     + &\int_0^t \delta\mu(s,x)^{\top} \Sigma^{-1} \mathrm{d}W_t \biggr).
     \label{eq:exp}
\end{align}
\end{definition}
We will drop the superscript in $\frac{\mathrm{d}\mathbb{P}_{X_{t}^{(1)}}}{\mathrm{d}\mathbb{P}_{Y_{t}^{(2)}}}$ except for cases where specifying the drift index is necessary. Definition~\ref{def:girsanov} relates to Girsanov's theorem~\citep{sarkka2019applied} and note that the $X_t, Y_t$ originates from the same filtration $\mathcal{F}_t$ generated by the Brownian motion $W_t$. 
The likelihood ratio facilitates computing the expectation of a function of $X_t$ by using generated samples of $Y_t$.
Specifically, it holds that
\begin{equation}
\mathbb{E}\left[p_0(X_t) \; |\; \mathcal{F}_t\right] = \mathbb{E}\left[p_0(Y_t)\frac{\mathrm{d}\mathbb{P}_{X_t}}{\mathrm{d}\mathbb{P}_{Y_t}} \; \bigg | \; \mathcal{F}_t \right]
\label{eq:com}
\end{equation}
meaning that to change the PDE parameters $\mu, \sigma$, we only need to compute a transformation of the existing sampled paths rather than resampling from scratch.
A standout example is the case of Brownian motion when $\mu^{(1)}(t, X_t) =0$, where we can sample $N$ Gaussian random variables with variances depending on $\sigma$ and $t$ (i.e., independent of state) and reuse the sample path for different $\mu^{(i)}$. 
Through It\^o's lemma, we can apply a function $f$ to the sampled Brownian motion to find a new sample path that has non-unit volatility\footnote{To avoid confusion with diffusion processes, we will refer to $\sigma$ as the \emph{volatility}, but note that it often denotes the diffusion coefficient.} and depends on the state: 
\begin{lemma}[It\^{o}'s lemma~\citep{sarkka2019applied}]
\label{lem:ito}
If $W_t \in \mathbb{R}^d$ is a Brownian motion on $[0,T]$, and $f(x) : \mathbb{R}^d \to \mathbb{R}$ is a twice continuously differentiable function, then for any $t\leq T$, 
\begin{align}
\label{eq:ito_lemma}
    \mathrm{d}f(W_t)=\frac{1}{2} \mathrm{tr}(\mathrm{Hess}_xf(W_t))\mathrm{d}t+\nabla_x f(W_t) ^T \mathrm{d}W_t.
\end{align}
\end{lemma}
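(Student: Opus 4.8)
The plan is to establish the integral form equivalent to~\eqref{eq:ito_lemma}, namely that almost surely, for each $t \le T$,
\[
f(W_t) - f(W_0) = \int_0^t \nabla_x f(W_s)^\top \, \mathrm{d}W_s + \frac12 \int_0^t \mathrm{tr}\big(\mathrm{Hess}_x f(W_s)\big)\, \mathrm{d}s,
\]
the differential notation then being a definition. First I would reduce to the case where $f$ together with its first and second derivatives is bounded. The general $C^2$ case follows by a standard localization: apply the bounded case on $[0, t \wedge \tau_R]$ with the stopping time $\tau_R = \inf\{s : |W_s| \ge R\}$ after multiplying $f$ by a smooth cutoff supported on $\{|x| \le R+1\}$, and let $R \to \infty$, using continuity of the Brownian paths so that $\tau_R \to \infty$ almost surely.

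For the bounded case, fix $t$ and a sequence of partitions $0 = t_0 < t_1 < \cdots < t_m = t$ whose mesh tends to zero. Writing $\Delta_k W = W_{t_{k+1}} - W_{t_k}$ and telescoping, $f(W_t) - f(W_0) = \sum_k \big( f(W_{t_{k+1}}) - f(W_{t_k}) \big)$, and a second-order Taylor expansion of each summand gives
\[
f(W_{t_{k+1}}) - f(W_{t_k}) = \nabla f(W_{t_k})^\top \Delta_k W + \tfrac12 \Delta_k W^\top \big(\mathrm{Hess}\, f(W_{t_k})\big) \Delta_k W + R_k,
\]
where $R_k$ is the Taylor remainder. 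I would then prove three limits as the mesh shrinks: (i) $\sum_k \nabla f(W_{t_k})^\top \Delta_k W \to \int_0^t \nabla f(W_s)^\top \mathrm{d}W_s$ in $L^2$, which is immediate from the definition of the It\^o integral since $s \mapsto \nabla f(W_s)$ is adapted, continuous and bounded; (ii) $\sum_k \tfrac12 \Delta_k W^\top \big(\mathrm{Hess}\, f(W_{t_k})\big) \Delta_k W \to \tfrac12\int_0^t \mathrm{tr}\big(\mathrm{Hess}\, f(W_s)\big)\,\mathrm{d}s$ in $L^2$; and (iii) $\sum_k R_k \to 0$ in probability. Summing (i)--(iii) against the fixed left-hand side gives the identity.

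Step (ii) is the crux. The key fact is that for the independent increment $\Delta_k W$, conditionally on $\mathcal{F}_{t_k}$ one has $\E[\Delta_k W^i \Delta_k W^j \mid \mathcal{F}_{t_k}] = \delta_{ij}(t_{k+1}-t_k)$, hence $\E[\Delta_k W^\top A\, \Delta_k W \mid \mathcal{F}_{t_k}] = \mathrm{tr}(A)(t_{k+1}-t_k)$ for $\mathcal{F}_{t_k}$-measurable $A$. Thus $\sum_k \Delta_k W^\top \big(\mathrm{Hess}\, f(W_{t_k})\big) \Delta_k W$ has the same $L^2$ limit as the Riemann sum $\sum_k \mathrm{tr}\big(\mathrm{Hess}\, f(W_{t_k})\big)(t_{k+1}-t_k)$: expanding the $L^2$ norm of the difference, cross terms across distinct blocks vanish by independence, and the diagonal terms are controlled by $\E|\Delta_k W|^4 = O((t_{k+1}-t_k)^2)$ together with boundedness of the Hessian, so the difference tends to $0$. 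The Riemann sum converges pathwise to the integral by continuity of $s \mapsto \mathrm{Hess}\, f(W_s)$. For (iii), uniform continuity of $\mathrm{Hess}\, f$ yields $|R_k| \le \varepsilon(|\Delta_k W|)\,|\Delta_k W|^2$ with $\varepsilon(\cdot)\to 0$, and $\sum_k |\Delta_k W|^2$ is bounded in $L^1$ (it converges to the quadratic variation $t$), so $\sum_k R_k \to 0$.

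The main obstacle I anticipate is the bookkeeping in step (ii): controlling, uniformly over the partition, the $L^2$ distance between the random quadratic form and its compensator, which requires the fourth-moment estimates for Brownian increments and care with the off-diagonal indices in dimension $d$. The second delicate point is making the localization argument in the unbounded-derivative case fully rigorous, since the relevant compact set is random; both are standard but warrant attention.
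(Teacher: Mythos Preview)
Your proposal is a correct and standard proof of It\^o's formula for Brownian motion: the localization to bounded $C^2$ functions, the second-order Taylor expansion along a partition, the $L^2$ convergence of the It\^o sums in step (i), the quadratic-variation replacement in step (ii) via the moment identities $\E[\Delta_k W^i \Delta_k W^j \mid \mathcal{F}_{t_k}] = \delta_{ij}(t_{k+1}-t_k)$ and fourth-moment control, and the remainder estimate in step (iii) are all sound. The only place to be a touch more careful is (iii): as written you invoke uniform continuity of $\mathrm{Hess}\,f$, which need not hold globally for a merely $C^2$ function; this is exactly what your localization already handles (on $\{|x|\le R+1\}$ the Hessian is uniformly continuous), so just make sure you carry out (iii) after localizing, not before.

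By contrast, the paper does not prove Lemma~\ref{lem:ito} at all: it is stated as a classical result with a citation to \citet{sarkka2019applied} and used as a tool, so there is no argument in the paper to compare against. Your write-up therefore supplies strictly more than the paper does for this statement; if you want a compact reference matching your outline, the proof in \citet{sarkka2019applied} or any standard stochastic-calculus text (e.g., Karatzas--Shreve) proceeds along exactly the same Taylor-expansion-plus-quadratic-variation route you describe.
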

With these tools in mind, we describe how to efficiently calculate solutions of parabolic PDEs with changing drift and volatility functions. 

\section{Learning a base model}
Recall that we want to develop a model with the following properties such that it applies to many tasks:
\begin{itemize}
    \item Adaptability, the model defines a representation shared across many tasks; 
    \item Extensibility, the representation can be rapidly assimilated to new tasks. 
\end{itemize}
To achieve these properties, we invoke Definition~\ref{def:girsanov} and Lemma~\ref{lem:ito} to a base set of sample paths shared across all tasks.
We first describe the general approach and why naively applying Definition~\ref{def:girsanov} fails in many cases, which motivates the need for neural parameterization.
We then consider maximizing a function of parabolic PDE solutions rather than solving the PDEs.
We finally present the operator-learning framework where different tasks correspond to the solutions to a PDE under different parameter configurations.

\subsection{Solving parabolic PDEs through importance sampling}

\label{sec:pde_girsanov}
Consider a parabolic PDE that satisfies~\eqref{eq:parabolic} using the representation given in~\eqref{eq:fk}.
Computing the solution for~\eqref{eq:parabolic} for a series of $K$ different drifts, $\{\mu^{(i)}\}_{i=1}^K$, requires simulating a different SDE for each $\mu^{(i)}$. 
Simulating the SDEs with an Euler-Maruyama integration, where the discretization size is $h$, the number of time steps is $N_T=T/h$, and approximating the expectation with $N_E$ different realizations requires $N_E \times N_T \times K$ computations.
Importantly, to compute $X_t$, the computation \emph{must be performed sequentially}, since it relies on the previous value of $X_t$.
This sequential operation induces the main bottleneck, and we try to avoid it whenever possible.

We instead consider the SDE $\mathrm{d}Y_t = \sigma(Y_t) \mathrm{d}W_t$ associated with marginal distribution $\mathbb{P}_{Y_t}$ which we assume we can simulate as a function of Brownian motion $f(W_t)$.
Suppose we wish to solve the PDE in~\eqref{eq:parabolic} for $\mu = \{\mu^{(i)}(x)\}_{i=1}^K, \sigma = \sigma(x)$.
Using~\eqref{eq:com} combined with~\eqref{eq:fk}, we can write the solution with $\mu^{(i)}$ write the expectation as
\begin{align}
 \nonumber  &p_{\mu^{(i)}}(T, x) := \\ \nonumber &\mathbb{E}_{\mathbb{P}_{X_t^{(i)}}}\left[p_0(X_T^{(i)})\exp\left(-\int_0^T r\left(X_s^{(i)}\right)\mathrm{d}s\right) \; \bigg | \; X_0^{(i)}=x\right  ] \\ 
   &=\mathbb{E}_{\mathbb{P}_{Y_t}} \left[p_0(Y_T)\exp\left(-\int_0^T r(Y_s)\mathrm{d}s\right)\frac{\mathrm{d}\mathbb{P}_{X_T^{(i)}}}
   {\mathrm{d}\mathbb{P}_{Y_T}} \; \bigg | \; Y_0=x\right ]
   \label{eq:fk_is}
\end{align}
where $X_t$ satisfies $\mathrm{d}X_t^{(i)} = \mu^{(i)}(X_t^{(i)}) \mathrm{d} t + \sigma(X_t^{(i)}) \mathrm{d}W_t$ with measure $\mathbb{P}_{X_t^{(i)}}$. 
By writing the expectation with respect to $Y_t$, we can reuse the $N$ simulations of $Y_t$ for each $\mu^{(i)}$, and we only need to compute the likelihood ratio, $\frac{\mathrm{d}\mathbb{P}_{X_T}^{(i)}}{\mathrm{d}\mathbb{P}_{Y_T}}$ for $i=1\ldots K$.
This formulation is crucial since each $\frac{\mathrm{d}\mathbb{P}_{X_T}^{(i)}}{\mathrm{d}\mathbb{P}_{Y_T}}$  only requires an integral (approximated by a sum) rather than sampling an SDE. 
Unfortunately upon inspecting~\eqref{eq:exp}, computing the likelihood ratio $\frac{\mathrm{d}\mathbb{P}_{X_T}}{\mathrm{d}\mathbb{P}_{Y_T}}$ requires computing the exponential of numerically approximated integrals.
This results in an error that grows exponentially in the discretization size $h$ rather than linearly in $h$, as is the case when directly computing the expectation with new sample paths from each $\mu^{(i)}$.
In the next section, we discuss ways to circumvent this issue by deriving a lower bound of the PDE solution with an error linear in $h$ and developing a neural network-based representation of the likelihood ratio. 

\subsection{Maximizing parabolic PDEs}
\label{sec:meta_gen_formulation}

It is often the case that we are interested in the parameters that \emph{maximize} the expectation of a function of sample paths, i.e. $\max \mathbb{E}\left[ J\left(X^{(i)}_T\right) \right]$ where the maximization is over parameters of $X_T$ for different tasks associated with $K$ distinct datasets, $\{X^{(i)}\}_{i=1}^K$. 
Note that by Lemma~\ref{lem:ito}, this expectation satisfies the PDE~\eqref{eq:parabolic}.
To motivate this approach, we will consider the problem of sampling from a family of target distributions, as illustrated on the left side of Figure~\ref{fig:schematic}.
This is closely related to the works on diffusion models in~\citep{NEURIPS2021_0a9fdbb1,Song2021ScoreBasedGM},
which uses a neural network to approximate the score function of a transition density described by an SDE.
However, in those cases, a specific form of the forward SDE is used such that an analytical form of the transition density exists.
Instead, we are interested in relaxing this parametric assumption and working directly with the Fokker-Planck equation through its stochastic representation.

Continuing with $K$ target distributions $\{p^{(i)}(x)\}_{i=1}^K$ that we wish to approximate, when approximating with an It\^o diffusion, we can represent each distribution as the solution of the Fokker-Planck equation at some terminal time $T$ under $K$ different parameters.
This equation is a linear parabolic PDE which we can write in terms of an expectation according to the Feynman-Kac formula~\eqref{eq:fk} with $r = \nabla \cdot \mu^{(i)}$ (see Appendix~\ref{sec:max} for derivation) and $ \d X_t^{(i)} = \mu^{(i)}(t, X_t^{(i)}) \d t + \sigma(t) \d W_t$, assuming that $\sigma$ is independent of $X_t$ for ease of explanation.

Then, we want to maximize the distribution's likelihoods for $K$ different tasks at $T$.
We will use the idea described in Section~\ref{sec:pde_girsanov} to bypass the expensive Euler-Maruyama sampling procedure during each training iteration.
To do this, we use the form in~\eqref{eq:fk_is} with the parameters of the latent process $Y_t$ being the meta-learned parameters, i.e. 

$\mathrm{d}Y_t = \mu_0(Y_t, t) \d t + \sigma_0(t) \d W_t, \;\; Y_0 \sim p_0$.
As such, we have translated the problem from requiring $X_t$ to $Y_t$ samples, which are reusable across training iterations.
We take the parameters $(p_0, \mu_0, \sigma_0)$ as the \emph{meta parameters}, which are optimized over all $K$ tasks.

The exponential term in~\eqref{eq:com} still incurs high errors for large $T$, and solving the PDE using this form is inaccurate without prohibitively large $N_h$.
To circumvent this, we remind ourselves that the goal is to \emph{maximize} the solution of the PDE, which corresponds to the likelihood in this case. 
Applying Jensen's inequality, we obtain an evidence lower bound (ELBO) without the exponential error:
\begin{align}
    \log p(T,x)
    &\geq \mathbb{E}_{\mathbb{P}_{X_t}}\left[\int_0^T\nabla\cdot\mu(s, X_{s})\mathrm{d}s + \log p_0(X_T)  \right]     \label{eq:elbo_direct}  \\
  \nonumber  &= \mathbb{E}_{\mathbb{P}_{Y_t}}\bigg[\int_0^T\hat{\mu}(s,Y_{s})\mathrm{d}W_s -\int_0^T\frac{1}{2}\hat{\mu}^2(s,Y_{s}) \\ &\;\; \quad - \nabla \cdot \mu(s,Y_s) \mathrm{d}s + \log p_0(Y_T) \mid Y_0 = x \bigg] 
    \label{eq:elbo_is}
\end{align}
with~\eqref{eq:elbo_is} denoted as $\mathrm{ELBO}_{\mathrm{IS}}$, with $\mathrm{IS}$ referring to \emph{importance sampling}, and~\eqref{eq:elbo_direct} denoted as $\mathrm{ELBO}_\mathrm{direct}$ and $\hat{\mu} =  (\sigma_0\sigma_0^\top)^{-1}\mu, \hat{\mu}^2 = \mu^\top (\sigma_0\sigma_0^\top)^{-1} \mu$.
\citet{Huang2021AVP} explored a similar idea for score-based diffusion models and proved that the bound in~\eqref{eq:elbo_direct} is tight when maximizing over a sufficiently expressive class of drift parameterizations.
Circling back to the maximum likelihood estimation problem, we can apply this technique to reduce the exponential error to linear while reusing the sample paths $Y_t$ and meta-learning the parameters $(p_0, \mu_0, \sigma_0)$.
We describe an explicit example of meta-learning $p_0$ in Appendix~\ref{sec:max}.

\subsection{\texorpdfstring{$\ngo$}{Lg}: Meta-learning a continuous space of tasks}
\label{sec:ngo}

\begin{figure} 
\centering
    \includegraphics[width=0.45\textwidth]{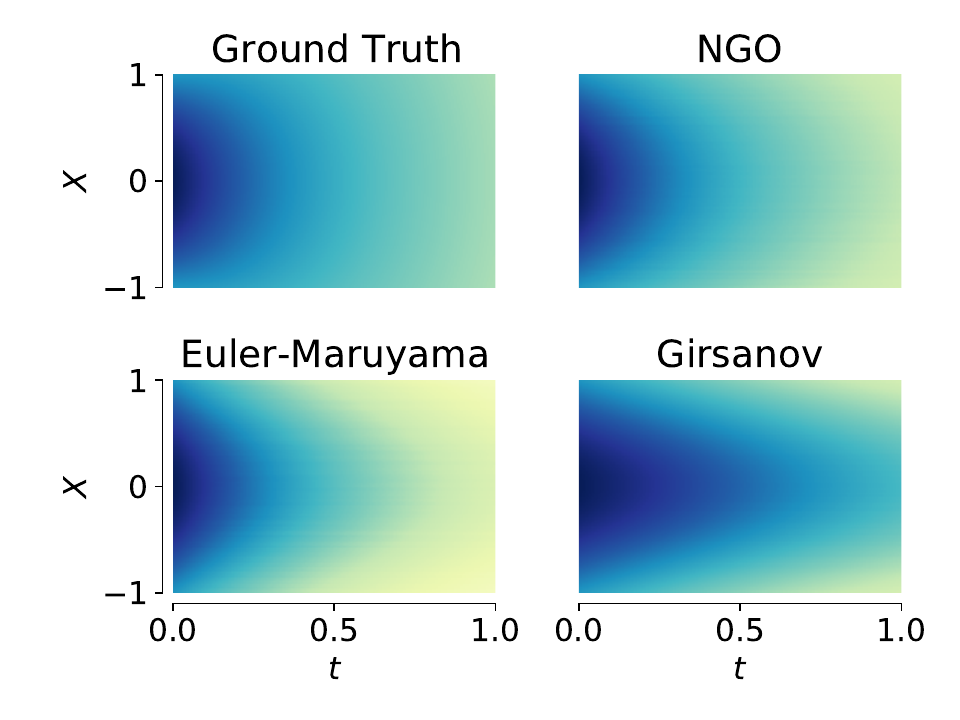}
    \caption{Simulated solutions of the Fokker-Planck equation of an $1d$ OU-process compared with the analytic solution.}
    \label{fig:ou_1d}
\end{figure}

Having described the meta-learning framework where we consider $(p_0, \mu_0, \sigma_0)$ as meta-learned parameters for the case of maximizing the solution to a parabolic PDE, we now study estimating the explicit \emph{solution} to the PDE. 
Specifically, we want a model that easily generalizes to solutions for different parameters in one shot. 
From the previous section, we can apply importance sampling to compute the solution for different $\mu^{(i)}$'s while reusing the sample paths. 
However, we noted that this incurs an exponential error for the integration in time when computing with the Euler-Maruyama method. 
The key idea of the operator learning approach is to instead learn an optimal integrator through a neural network for a parametric family of drifts. 
We refer to this as the Neural Girsanov Operator ($\ngo$), which transforms the expected value of solutions of SDEs $Y_t$ with meta-parameters $(p_0, \mu_0, \sigma_0)$ to SDEs $X_t$ with drift $\mu^{(i)}$.
We will consider the case where $\mu_0 = 0, \sigma_0 = 1$ for ease of exposition.

This approximation of $\mathbb{E}[p_0(X_T)\mid \mathcal{F}_T]$ is done by equating the following:
\begin{align*}
    &\mathbb{E}_{\mathbb{P}_X}[p_0(X_T) \mid \mathcal{F}_T] \approx \\ &\mathbb{E}_{\mathbb{P}_Y}\left [p_0(Y_t)\ngo\left(\{\mu(Y_{s_n}), \Delta W_{s_n}, h \}_{n_T=1}^{N_T} ; \theta\right ) \; \Big | \; \mathcal{F}_T\right ] 
\end{align*}
where $\ngo(\cdot ; \theta)$ is a neural network with non-negative outputs with parameters $\theta$. 
For a parametric set of drifts $\{\mu^{(i)}( \cdot ; \xi_i) \}$, the $\ngo$ learns the optimal numerical integrator over the parameter space.
We parameterize $\ngo$ using a convolutional neural network, motivated by the connection between finite different stencils and convolutions, although the integration performed by $\ngo$ involves non-linear terms.
To optimize the parameters of $\ngo$, we consider the following optimization over the measure $\nu(\xi)$ of all parameterizations of drifts $\mu$ we are interested in solving:
\begin{align}
\nonumber \min_{\theta} \mathbb{E}_{\mu_\xi } \Bigg (  &\mathbb{E}\left[ p_0(Y_T) \ngo (  \{\mu_\xi (Y_{s_n}), \Delta W_{s_n}, h \right\}_{n_T =1}^{N_T}; \\ \theta )  ] &-  \mathbb{E} \left [  p_0(Y_T) \frac{\mathrm{d} \mathbb{P}_{X_{\mu_\xi}}}{\mathrm{d} \mathbb{P}_Y} \right] \Bigg)^2.
\label{eq:loss_ngo}
\end{align}
The loss in~\eqref{eq:loss_ngo} has a few helpful properties.
First, this loss only requires sampling Brownian motion and approximating the integral in~\eqref{eq:exp}.
This property is a notable departure from the usual requirement of the exact solution and parameter values --- we do not need to solve for the explicit solution of the PDE using Euler-Maruyama but only need to sample Brownian motion and approximate an integral.
Second, although the second term incurs high numerical error, since we average over many realizations, this error does not affect the approximation of the solution.
Finally, this solution does not require a meshing of the domain and provides solutions depending on the starting points of the Brownian motion, which can be arbitrary.
This attribute is vital for evaluating solutions over complex domains.
~\autoref{alg:gir} describes the full algorithm.

\begin{algorithm}
\caption{Approximating linear parabolic PDEs with $\ngo$} \label{alg:gir}
\begin{algorithmic}[1]
    \Require $N \in \mathbb{N}$, $h \in \mathbb{R}_+$, $\mu(t, x) : \mathbb{R}_+ \times \mathbb{R}^d \to \mathbb{R}^d$, evaluation coordinate $(T, X)$ 
    \State Sample $N$ Brownian motions to time $T$ starting at $X$, $\left \{X + \sqrt{k h} \varepsilon^{(i)}\right\}_{k =1\ldots T/h }^{i=1 \ldots N}$, $\varepsilon \sim \mathcal{N}(0,1)$
    \For{$i \in \{1, \ldots, N\}$} \Comment{Easy to parallelize.}
        \State Compute $\frac{\mathrm{d} \mathbb{P}_\mu^{(i)}}{\mathrm{d} \mathbb{P}_W} \approx \ngo\left[ \left\{\mu\left( W_{k }^{(i)} \right) \right\}_{k=1}^{ T/h}, \left\{\sqrt{kh}\varepsilon^{(i)} \right\}_{k=1}^{T/h},h\right] $ \Comment{Stochastic exponential.} 
        
    \EndFor 
    \Ensure Approximation of $u(T,X)$ as $\check{u}(T,X) = \frac1N \sum_{i=1}^N p_0(W_T^{(i)}) \frac{\mathrm{d} \mathbb{P}_\mu^{(i)}}{\mathrm{d} \mathbb{P}_W}$
\end{algorithmic}
\end{algorithm}

\paragraph{Extension to semi-linear parabolic PDEs} 
We presented the method in terms of linear parabolic PDEs. 
However, extending to semi-linear parabolic PDEs is relatively straightforward by again using the stochastic representation of such PDEs. 
We can consider equations of the form:
\begin{equation}
\begin{cases}
    \frac{\partial p}{\partial t}+\frac{1}{2}\mathrm{Tr}(\sigma\sigma^\top(t,x) \mathrm{Hess}_x p(t,x))+\nabla_x p(t,x)^\top \mu(t,x) \\ \quad +\phi(x,p,\sigma^\top \nabla p)=0,\\
     p(T,x)=g(x).
\end{cases}
\label{eq:nonlin_pde}
\end{equation}
This PDE has a stochastic representation:
\begin{align*}
    \d X_t &= \mu(t,x) \d t+\sigma(t,x)\d W_t, & X_0 &= x; &  \\
     \d S_t&=-\phi(x,p,\sigma^\top \nabla_x p) \d t +Z_t^\top \d W_t, & S_T &=g(X_T), 
\end{align*}
and $p(t,x) = \mathbb{E}[S_t \mid X_0 = x]$~\citep{EXARCHOS2018159,doi:10.1080/07362990903546405}.
Simulating this system requires computing two Euler schemes: one for the forward component $X_t$ and the other for the backward process $S_t$.
We can easily follow the scheme for the forward component $X_t$ presented in~\autoref{alg:gir} by sampling Brownian motion and computing an expectation with the estimated exponential martingale.
If $\phi$ does not depend on $p$, we can compute the integration of $S_t$ using a basic sum without requiring sequential computations.
We provide the complete algorithm in Appendix~\ref{sec:alg_semi_linear}.

\paragraph{Extension to elliptic PDEs}
Finally, we note that extending to elliptic PDEs is also relatively straightforward. 
Elliptic PDEs require computing the first hitting time of the domain boundary $\partial \mathcal{D}$ at each evaluation point. 
Then, using the first hitting times, the importance sampling follows. 
Specifically, we modify the stochastic exponential in~\eqref{eq:exp} to be:
$
\frac{\mathrm{d} \mathbb{P}_Y}{\mathrm{d} \mathbb{P}_X} := \exp\left( \int_0^{\tau} \mu(X_s) \mathrm{d}W_s - \frac12 \int_0^{\tau} \mu(X_s)^2 \mathrm{d} t \right)
$
where $\tau$ is the first hitting time of $\partial \mathcal{D}$ starting at $x$.

\section{Properties of the estimators}
Since the meta-learning framework follows from the stochastic representation of this class of PDEs, theoretical analysis is particularly amenable in contrast to other black-box methods that only use neural networks. 
We discuss the error rate of the $\mathrm{ELBO}$ in~\eqref{eq:elbo_is} and the convergence rate of the $\ngo$ over a parametric space of solutions.

\subsection{Error analysis}
\label{sec:error_analysis}

We first note that the proposed algorithm induces a tradeoff between memory and execution time since we save the Brownian motions underlying the importance sampling. 
Saving the Brownian motions is a minor constraint since the Brownian motions saved are only $\mathcal{O}(N_E \times d)$ where $N_E$ is the number of Monte Carlo samples used to estimate the expectation, and $d$ is the number of dimensions. 
Additionally, these can be distributed over multiple devices, as no communication between nodes is needed when computing the expectation.
We also analyze the approximation error of both $\mathrm{ELBO}_{\mathrm{IS}}$ and $\mathrm{ELBO}_{\mathrm{direct}}$ presented in Section~\ref{sec:meta_gen_formulation}. 
Although $\mathrm{ELBO}_{\mathrm{IS}}$ introduces more errors than $\mathrm{ELBO}_{\mathrm{direct}}$ by having more integration terms, they are all of at least order $\mathcal{O}(h^2)$. 
Employing a multi-level architecture based on the multi-level Monte Carlo can improve the accuracy further under a similar computational budget~\citep{giles2015multilevel}.

\subsection{Uniform convergence over drift parameters}

A final property of the $\ngo$ concerns the convergence rate over a family of solutions to PDEs with parameter $\mu(x,t)$ and $\sigma(t)$ dependent only on $t$. 
Specifically, by using the properties of the stochastic representation, we can show that a well-learned $\ngo$-based solution $p_\theta^{\xi}(x)$ uniformly converges over the parameter space $\Xi$ to the ground truth $p^\xi(x)$ under mild conditions.
Intuitively, since the $\ngo$ learns how to compute the likelihood ratio, we can change the parameters within a compact set while maintaining high performance over this set.

\begin{proposition}[Uniform Convergence]
\label{prop:unif}
For fixed $x \in \mathcal{D}, T \in \mathbb{R}_+$, consider a space of functions $\mathcal{F} = \left \{ \d \mathbb{P}_{X_T^{(\xi)}}/\d \mathbb{P}_{Y_T} : \xi \in \Xi \right \}$ parameterized by $\xi$ from a compact set $\Xi \subset \mathbb{R}^k$ satisfying $Var\left(\d \mathbb{P}_{X_T^{(\xi)}}/\d \mathbb{P}_{Y_T}\right) < \infty$ for all $\xi \in \Xi$ with $\mathbb{P}_{X_T^{(\xi)}}$ denoting the distribution of the solution $ X_T = x + \int_0^T \mu(X_t, t; \xi) \d t + \int_0^T \sigma(t) \d W_t$ and $\mathbb{P}_{Y_T}$ the distribution of $Y_t = x + \int_0^T \sigma(t) \d W_t$. 
Additionally, assume that the image of $(T, X_T) \mapsto \mu(T,X_T;\xi)$ is compact for all $X_T, \xi$.
Then,
$
\mathbb{G}_{N_E} = \sqrt{N_E} \left(p_\theta^{\xi}(x, T) - p^\xi(x, T) \right)
$
converges in distribution to a zero-mean Gaussian process over $\xi \in \Xi$ as $N_E \to \infty$ where $N_E$ is the number of samples used to compute the expectation. 
\end{proposition}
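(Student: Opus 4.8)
The plan is to read $\mathbb{G}_{N_E}$ as an empirical process indexed by $\xi$ and invoke a functional central limit (Donsker) theorem. First I would write the $\ngo$-based estimator, under the ``well-learned'' hypothesis that $\ngo$ reproduces the likelihood ratio, as the sample mean $p_\theta^{\xi}(x,T) = \frac{1}{N_E}\sum_{i=1}^{N_E} g_\xi\big(Y^{(i)}\big)$, where $g_\xi(Y) := p_0(Y_T)\,\d\mathbb{P}_{X_T^{(\xi)}}/\d\mathbb{P}_{Y_T}$ and $Y^{(1)},\dots,Y^{(N_E)}$ are i.i.d.\ copies of the drift-free diffusion $Y$ (a Gaussian process, since $\sigma$ depends only on $t$); by~\eqref{eq:fk_is} the ground truth is $p^{\xi}(x,T)=\mathbb{E}[g_\xi(Y)]$. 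Then $\mathbb{G}_{N_E}(\xi)=\sqrt{N_E}\big(\tfrac1{N_E}\sum_i g_\xi(Y^{(i)})-\mathbb{E}g_\xi(Y)\big)$ is precisely the empirical process of the class $\mathcal{G}=\{g_\xi:\xi\in\Xi\}$, so it suffices to prove $\mathcal{G}$ is $\mathbb{P}_Y$-Donsker; the limit is then automatically a tight mean-zero Gaussian process with covariance $\Sigma(\xi,\xi')=\mathrm{Cov}\big(g_\xi(Y),g_{\xi'}(Y)\big)$.

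Second, I would establish the two ingredients of the Donsker property. Finite-dimensional convergence follows from the multivariate CLT together with the assumed $\mathrm{Var}\big(\d\mathbb{P}_{X_T^{(\xi)}}/\d\mathbb{P}_{Y_T}\big)<\infty$ and integrability of $p_0$ on the relevant range. For asymptotic equicontinuity the key step is an $L^2(\mathbb{P}_Y)$ Lipschitz bound $\|g_\xi-g_{\xi'}\|_{L^2}\lesssim |\xi-\xi'|$: using the Girsanov form~\eqref{eq:exp}, $g_\xi$ is $p_0(Y_T)$ times the exponential of the integrals $\int_0^T\delta\mu(s,Y_s;\xi)^\top\Sigma^{-1}\d W_s$ and $\int_0^T\delta\mu^\top\Sigma^{-1}\delta\mu\,\d s$, so the elementary inequality $|e^a-e^b|\le |a-b|(e^a+e^b)$ followed by Cauchy--Schwarz reduces the estimate to (i) a modulus-of-continuity bound in $\xi$ for those integrals, which follows from the It\^o isometry and a mild Lipschitz-in-$\xi$ regularity of $\mu(\cdot,\cdot;\xi)$ over its compact image (part of the ``mild conditions''), and (ii) a uniform exponential-moment bound $\sup_{\xi\in\Xi}\mathbb{E}\big[(\d\mathbb{P}_{X_T^{(\xi)}}/\d\mathbb{P}_{Y_T})^{2+\delta}\big]<\infty$, which is exactly where the compactness of the image of $(T,X_T)\mapsto\mu(T,X_T;\xi)$ is used: it makes a Novikov/Kazamaki-type condition hold uniformly and controls the moments of the exponential martingale. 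A Lipschitz-in-parameter family indexed by a compact $\Xi\subset\mathbb{R}^k$ with a square-integrable envelope has bracketing numbers $N_{[\,]}(\epsilon,\mathcal{G},L^2(\mathbb{P}_Y))=O(\epsilon^{-k})$, hence $\int_0^\infty\sqrt{\log N_{[\,]}(\epsilon,\mathcal{G},L^2)}\,\d\epsilon<\infty$ and $\mathcal{G}$ is Donsker by the bracketing Donsker theorem (van der Vaart and Wellner).

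Third, I would assemble the pieces: by the Donsker property $\mathbb{G}_{N_E}$ converges weakly in $\ell^\infty(\Xi)$ to a tight mean-zero Gaussian process $\mathbb{G}$ with the covariance above, and passing to its continuous modification gives the asserted Gaussian process over $\xi\in\Xi$. If one prefers not to assume the $\ngo$ error vanishes, one simply adds the deterministic term $\sqrt{N_E}\cdot(\text{$\ngo$ bias})$ and notes that it leaves the distributional limit unchanged provided it is $o_{\mathbb{P}}(1)$ uniformly in $\xi$.

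I expect the main obstacle to be part (ii) above: controlling differences and high moments of the Radon--Nikodym derivative uniformly in $\xi$. Exponential martingales are delicate, and turning the compactness hypotheses into a clean $L^2$-Lipschitz estimate with an integrable envelope requires carefully combining Novikov/Kazamaki-type bounds with the uniform boundedness of $\delta\mu(\cdot;\xi)$; pinning down the precise regularity of $\xi\mapsto\mu(\cdot;\xi)$ (Lipschitz is enough) and the admissible exponent $2+\delta$ is the technical heart of the argument, with everything else reducing to standard empirical-process machinery.
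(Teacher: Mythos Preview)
Your proposal is correct and follows essentially the same strategy as the paper: identify $\mathbb{G}_{N_E}$ as an empirical process indexed by $\xi\in\Xi$, show the class is $\mathbb{P}_Y$-Donsker by transferring the Euclidean covering number of the compact parameter set $\Xi$ to a bracketing bound on the function class, and invoke the multivariate CLT for the finite-dimensional marginals. The paper's proof is terser, simply asserting that the covering number $N(\varepsilon,\Xi,\|\cdot\|)\le C\varepsilon^{-k}$ bounds the bracketing number of $\mathcal{F}$ and citing a Donsker theorem; your version makes explicit the mechanism behind that step, namely the $L^2(\mathbb{P}_Y)$-Lipschitz estimate $\|g_\xi-g_{\xi'}\|_{L^2}\lesssim|\xi-\xi'|$ obtained from $|e^a-e^b|\le|a-b|(e^a+e^b)$, the It\^o isometry, and a uniform Novikov-type moment bound enabled by the compact-image hypothesis on $\mu$. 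This is exactly the content the paper leaves implicit, so your argument is a faithful (and somewhat more complete) rendering of the same proof.
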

The proof follows from first showing that $\mathcal{F}$ is $\mathbb{P}_{Y_T}$-Donsker and then follows with an analysis of our construction of the solution $p^{\xi}_\theta$ in terms of an expectation.
The complete statement is in Appendix~\ref{sec:uniform}.

\section{Experiments}

We now examine the capabilities of the models in their respective tasks.
First, we illustrate a proof-of-concept experiment on maximizing the parameters of a PDE by estimating $K$ target distributions in a generative modeling setting.
Then, we present our main experiments on solving parabolic PDEs. 
We simulate the sample paths with the basic Euler-Maruyama solver for all experiments.
\begin{figure}[ht!]
    \begin{subfigure}{0.23\textwidth}
        \centering
\includegraphics[width=\textwidth, trim=20pt 60pt 60pt 10pt]{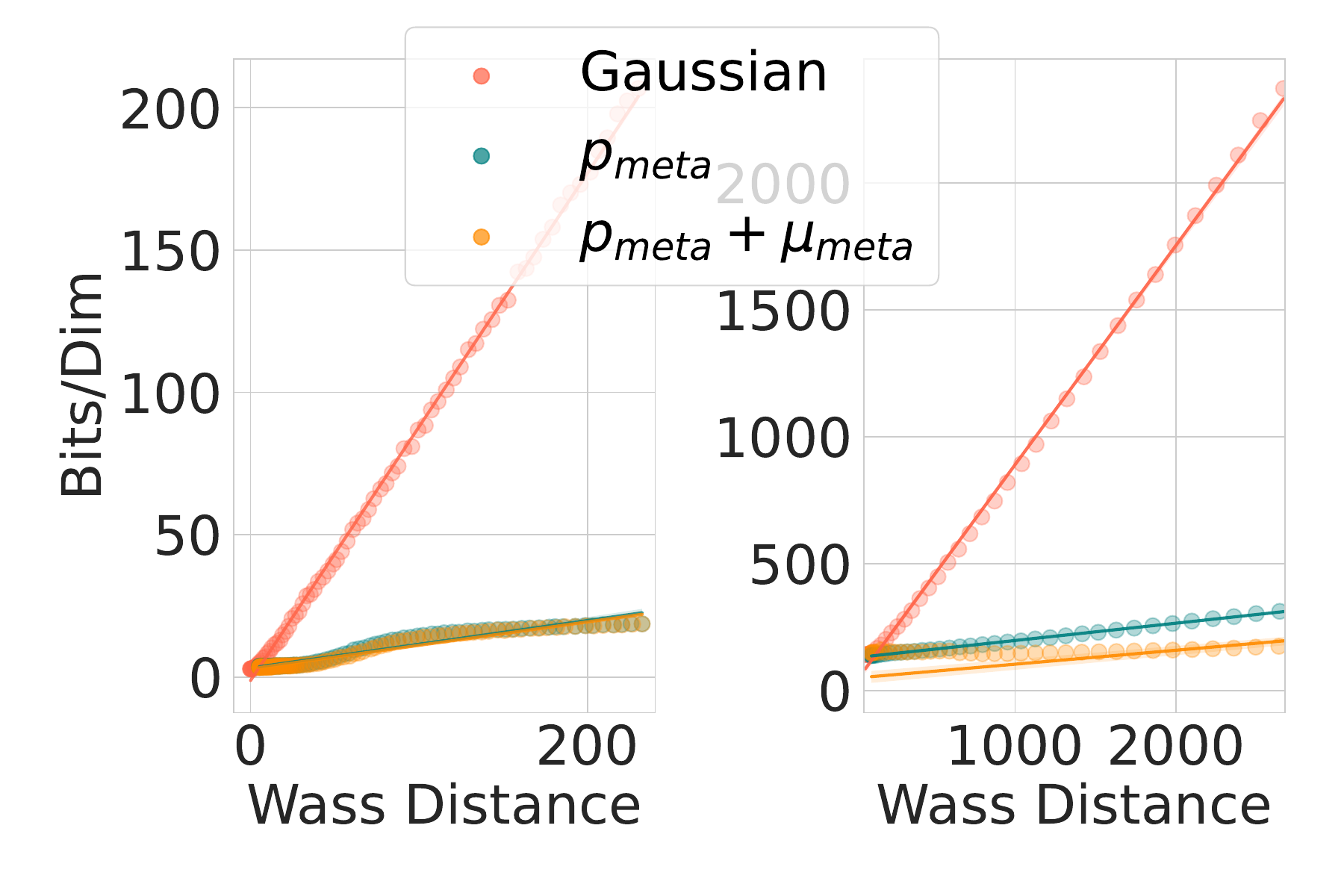}
    \caption{Sampling $2d$ (left) and $100d$ (right) Gaussians with different means starting from a standard Gaussian versus $p_\mathrm{meta}$ and no meta-drift versus $\mu_\mathrm{meta}$.}
    \label{fig:max_pde}
    \end{subfigure}
    \hfill
    \begin{subfigure}{0.23\textwidth}
    \centering
    \includegraphics[width=\textwidth, trim=20pt 40pt 10pt 50pt]{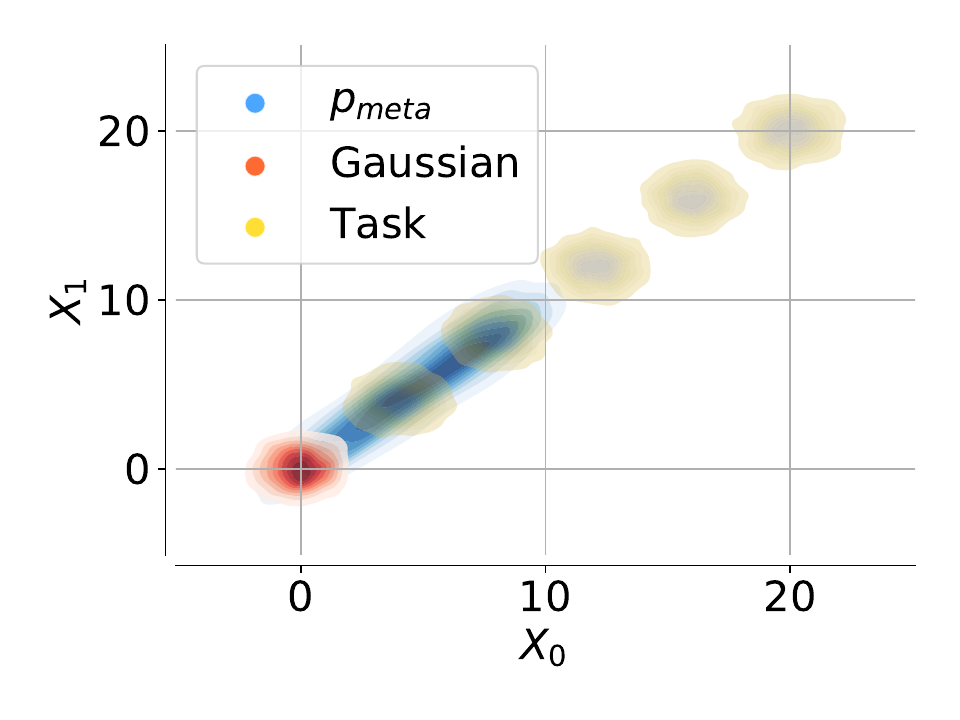}
    \caption{Comparing $p_\mathrm{meta}$, a standard Gaussian, and the target task distributions. $p_\mathrm{meta}$ provides the best initial condition for all the target distributions.}
    \label{fig:meta_dist}
    \end{subfigure}
    \caption{Numerical results for meta-learning generative models for Gaussian distributions where different means correspond to different tasks.}
    \label{fig:meta_learning_exp}
\end{figure}

\begin{figure*}[ht]
    \centering
    \begin{subfigure}[b]{0.32\textwidth}
        \centering
        \includegraphics[width=\textwidth, trim=20pt 10pt 10pt 40pt]{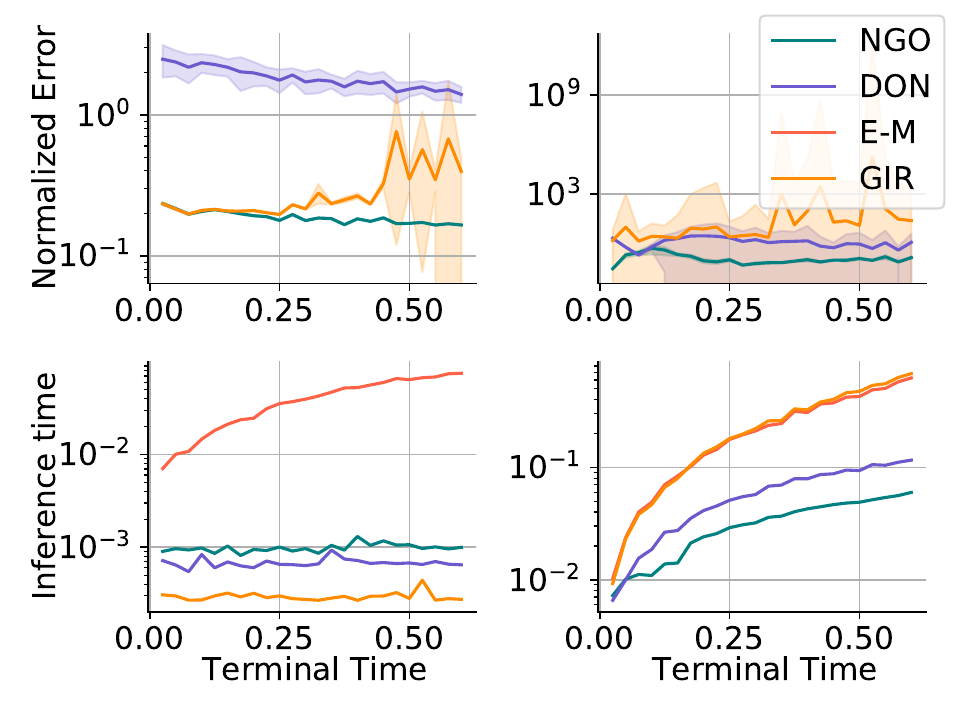}
        \caption{$10d$ linear (left) and semi-linear (right) parabolic PDEs. }
        \label{fig:ngo_loss_time}
    \end{subfigure}
    \hfill
    \begin{subfigure}[b]{0.32\textwidth}
        \includegraphics[width=\textwidth, trim=20pt 10pt 10pt 30pt]{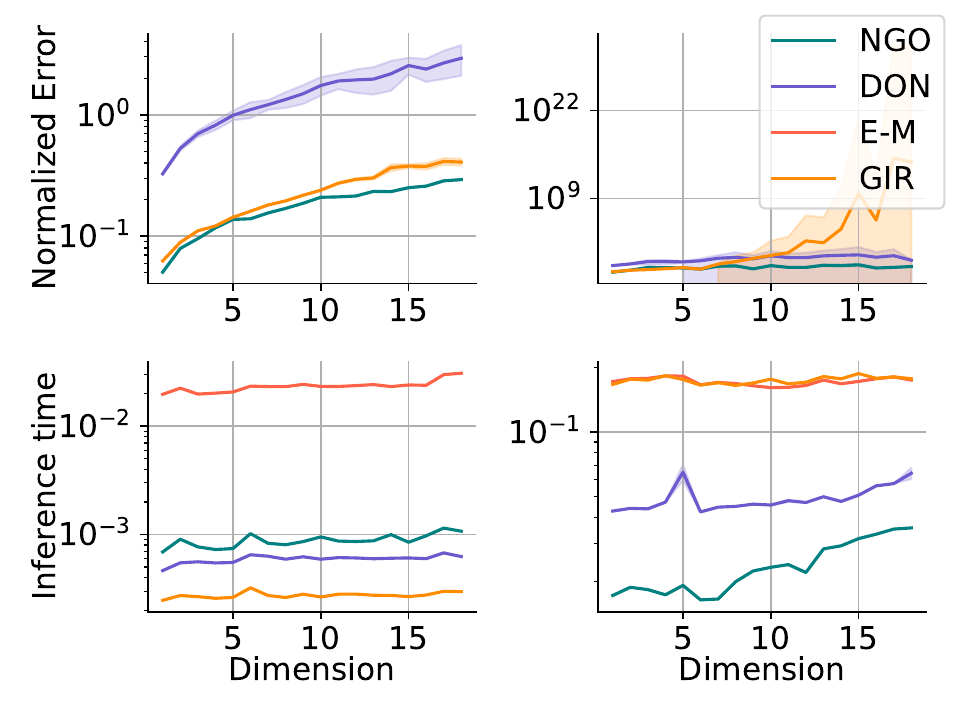}
        \caption{Varying the dimensionality at $T=0.25$ for linear (left) and semi-linear (right) parabolic PDEs.}
        \label{fig:vary_d}
    \end{subfigure}
    \hfill
    \begin{subfigure}[b]{0.32\textwidth}
\includegraphics[width=\textwidth, trim=20pt 10pt 10pt 30pt]{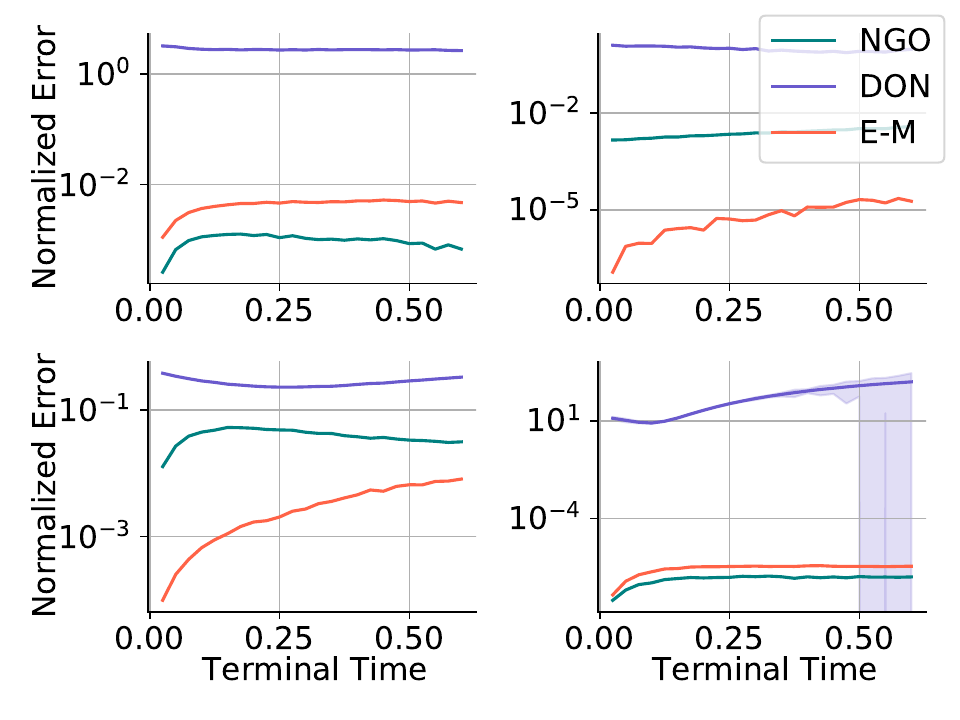}
        \caption{Normalized errors for $10d$ HJB (top left), BSB (top right), BS (bottom left), and FP (bottom right). }
        \label{fig:ngo_loss_exact}
        
    \end{subfigure}
    \caption{Comparison of the normalized errors and inference times of $\ngo$, DeepONet (DON), Girsanov (GIR), and Euler-Maruyama (E-M) on linear and semi-linear parabolic PDEs.}
    \label{fig:random_basis}
\end{figure*}

\subsection{Maximizing parameters}

To illustrate the running example on generative modeling, we consider maximizing the solution of the Fokker-Planck PDE (corresponding to the likelihood of a generative model) at a terminal time using~\eqref{eq:elbo_is}.
Consider the problem of approximating $K$ Gaussian target distributions with different means and the same covariance matrix.
We are interested in investigating the sample quality in the few-shot learning setting with and without the meta-learned parameter by training $K$ separate drift and diffusion functions $\{\mu_{i}, \sigma_{i}\}_{i=1}^K$ on $K$ different target distributions.
We represent the meta-parameter $p_\mathrm{meta}$ as a small normalizing flow, which we optimize over $\kappa < K$ training distributions similar to the $K$ target distributions.
To sample the $i^\text{th}$ target distribution parameterized by $\mu_{i}$ and $\sigma_{i}$, we first sample from the initial distribution (either a standard Gaussian or the meta-learned $p_\mathrm{meta}$) and evolve the SDE to the terminal time according to $\mu_{i}$ and $\sigma_{i}$.
Figure~\ref{fig:max_pde} shows the test distribution bits/dim for the $100$ target distributions compared to the $2$-Wasserstein distance between the initial and target distributions. 
Figure~\ref{fig:meta_dist} visualizes $p_\mathrm{meta}$, standard Gaussian, and sampled target distributions in the $2d$ case.
The results demonstrate the importance of including the meta-learned parameter in the optimization to improve generalization.

\subsection{Operator learning}

We consider examples of the $\ngo$ on operator learning tasks by testing on a few PDEs. 
We first consider a linear and semi-linear PDE, with parametric classes of the function $\mu$ in the linear PDE~\eqref{eq:parabolic} and $\mu,\,h$ in the semi-linear PDE \eqref{eq:nonlin_pde}.
Figure~\ref{fig:ou_1d} in the Appendix shows a visualization of solutions of a $1d$ Fokker-Planck equation calculated with the analytical solution, $\ngo$, Euler-Maruyama, and directly with Girsanov.
We then consider canonical parabolic PDEs consisting of two linear equations -- the Black-Scholes (BS) and the Fokker-Planck (FP) equations, and two semi-linear equations -- the Hamilton-Jacobi-Bellman (HJB) and the Black-Scholes-Barrenblatt (BSB) equations. 
These equations have applications in finance (BS, BSB), stochastic control (HJB), and the previously explored probabilistic modeling (FP).
We present detailed definitions of the PDEs in Appendix~\ref{sec:canonical}.
For this study, we compare against the DeepONet operator learning architecture~\citep{Lu2019LearningNO} (DON) with a similar model size, naively applying the change-of-measure (GIR), and the direct simulation with Euler-Maruyama (E-M). 
Note that the E-M method provides a strong baseline that encompasses many techniques in the related work (e.g.~\citep{berner2020numerically, glau2022deep, richter2022robust}), so we use this as a baseline for the existing deep learning methods based on Feynman-Kac. 
We compare computation time and accuracy between the different methods for estimating the solution under different $\mu$ and $h$ functions. 
When analytical solutions do not exist, we consider E-M with a substantial $N_E, N_T$ as the ground truth. 

\paragraph{PDEs with defined basis} 
In the linear case, we consider second-order polynomials 
\begin{equation*}
    \mu^{(i)}(x) \in \left\{ \sum_{i=0}^2 c_i x^i \mid c_i \sim \mathcal{U}(0,1) \right \}.
\end{equation*}
For the semi-linear equation, we also test changing the backward drift $\phi$ in~\eqref{eq:nonlin_pde} by considering basis functions given by 
\begin{align*}
    \phi_i(t, x, s, z) \in  \{ c_1 \sum \sin(x_i) + c_2 \sum z_i^2 + c_3 \cos(t + s) \\ \mid c_i \sim \mathcal{U}(0,1)  \}.
\end{align*}
We set the parameters $r=0$ and $\sigma=1$ for these experiments. 
We randomly sample from these function classes during training and then evaluate on a different test set of functions. 
The results are illustrated in Figure~\ref{fig:random_basis} and Table~\ref{tab:ngo} for these two equations with Figure~\ref{fig:ngo_loss_time} presenting the error and computation at different terminal times and Figure~\ref{fig:vary_d} considering the error and inference time at various dimensions. 
The proposed $\ngo$ has high accuracies while maintaining small computation times in all the tested regimes.

\paragraph{Canonical parabolic PDEs}
We test the generalization capabilities of NGO and DON models trained in the previous section on four canonical parabolic PDEs previously mentioned (BS, FP, HJB, and BSB) in $10d$. 
Since an exact solution is known for these equations, we compare $\ngo$, DeepONet, and E-M to the analytical solution (presented in Appendix~\ref{sec:canonical}).
Note that for the BS and BSB equations, a change in the volatility function $\sigma$ occurs.
Results on the normalized error are in Figure~\ref{fig:ngo_loss_exact}. 
The proposed $\ngo$ achieves low errors across all four tested canonical PDEs again. 
Additional ablations are available in Appendix~\ref{sec:ablation}. 

\begin{table}[hbt!]
\centering
\begin{tabular}{lllll}
    & \# Param. & Loss  & Inf. Time (s) \\ \toprule
NGO & 15.7K  & $4.50(0.05)${\footnotesize${ \times 10^{-2}}$}    &  2.7{\footnotesize${ \times 10^{-2}}$}   \\
FNO & 19.9K  & 4.44(0.16){\footnotesize${ \times 10^{-2}}$}    & 7.4{\footnotesize${ \times 10^{-2}}$}   \\ \midrule
NGO & 18.8K  & 5.10(0.11){\footnotesize${ \times 10^{-2}}$}    & 1.80{\footnotesize${ \times 10^{-2}}$}  \\
FNO & 2.0M   & 6.10(0.02){\footnotesize${ \times 10^{-2}}$}   & 1.14{\footnotesize${ \times 10^{-1}}$}  
\end{tabular}
\caption{$\ngo$ and Fourier Neural Operator (FNO) performances on one-dimensional (first two rows) and two-dimensional (last two rows) linear parabolic equations. We calculate the normalized losses and standard errors with five independent training episodes.}
\label{tab:ngo}
\end{table}

\section{Discussion}
We proposed a method for solving problems related to parabolic PDEs based on their stochastic representation.
We treat the parameters of parabolic PDEs' stochastic representation as the meta-learned parameters shared across all tasks and calculate task-specific solutions with them. 
This structure allows application in optimizations under different scenarios and solving PDEs with distinct parameters through the $\ngo$. 
Empirical results indicate that $\ngo$ provides a sizable advantage in computation time and accuracy compared to baselines.

\paragraph{Limitations}
Theoretically, if the target drift has a large magnitude, the variance of stochastic exponential can be high, which may lead to numerical instabilities. 
In this case, the direct Euler-Maruyama approach may be beneficial for training the neural operator. 

\begin{acknowledgements} 
This work was supported in part by the Office of Naval Research (ONR) under grant number N00014-21-1-2590.
AH was supported by NSF-GRFP.
\end{acknowledgements}

\bibliography{uai2024-template}

\newpage

\onecolumn

\title{Base Models for Parabolic Partial Differential Equations\\(Supplementary Material)}

\appendix
\section{Additional simulation results}
\begin{proposition}
    Approximating the $\mathrm{ELBO}_{\mathrm{IS}}$ and $\mathrm{ELBO}_{\mathrm{direct}}$ terms with Euler-Maruyama using step size $h$ will both induce the error $h\int_0^t\mathbb{E}[\psi_e(s,X_s)\mathrm{d}s]+\mathcal{O}(h^2)$, where 
\begin{align*}
    \psi_e(t,x)&=\frac{1}{2}\sum_{i,j=1}^d \mu^i(t,x) \mu^j(t,x)\partial_{x_ix_j}p(t,x)
    +\frac{1}{2}\sum_{i,j,k=1}^d \mu^i(t,x)a_k^j(t,x)\partial_{x_ix_jx_k}p(t,k)\\
    &+\frac{1}{8}\sum_{i,j,k,l=1}^d a_j^i(t,x)a_l^k(t,x)\partial_{x_ix_jx_kx_l}p(t,x)+\frac{1}{2}\frac{\partial^2}{\partial t^2}p(t,x)\\
    &+\sum_{i=1}^d \mu^i(t,x)\frac{\partial}{\partial t}\partial_{x_i} u(t,x)+\frac{1}{2}\sum_{i,j=1}^d a_j^i(t,x)\frac{\partial}{\partial t}\partial_{x_ix_j}u(t,x)
\end{align*}
and
$a(t,x)=\sigma(t,x)\sigma^\top(t,x)$.
\end{proposition}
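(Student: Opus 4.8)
The plan is to treat this as a weak-error (Talay--Tubaro-type) expansion for the Euler--Maruyama scheme, applied to the particular path functionals whose expectations define $\mathrm{ELBO}_{\mathrm{direct}}$ and $\mathrm{ELBO}_{\mathrm{IS}}$. First I would recall that, in continuous time, the two bounds in~\eqref{eq:elbo_direct} and~\eqref{eq:elbo_is} are \emph{equal}: the form~\eqref{eq:elbo_is} is obtained from~\eqref{eq:elbo_direct} by the change of measure of Definition~\ref{def:girsanov}. Hence it suffices to exhibit the leading error term for one of them and then argue that discretizing the extra It\^o and Riemann terms appearing in the importance-sampling form perturbs the answer only at order $\mathcal{O}(h^2)$. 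Throughout I would assume the usual smoothness and growth conditions on $\mu,\sigma$ and the data so that the solution $p$ (equivalently the backward Kolmogorov function $u$ attached to the Feynman--Kac representation, with $u=p$) is smooth enough for all derivatives appearing in $\psi_e$ to be bounded with polynomial growth; these hypotheses make every Taylor expansion below legitimate and the remainders uniformly $\mathcal{O}(h^2)$.

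Next I would set up a telescoping decomposition. With $t_k = kh$ and $\bar X$ the Euler--Maruyama path, I would write the error of $\mathbb{E}\!\left[\log p_0(X_t) + \int_0^t \nabla\cdot\mu(s,X_s)\,\mathrm{d}s\right]$ as a sum of one-step errors by inserting the exact flow: using the tower property and the PDE for $u$, the global error becomes $\sum_{k=0}^{N_T-1}\mathbb{E}\big[(Q_h - P_h)\,u(t_{k+1},\bar X_{t_k})\big]$ plus an analogous sum for the running integral, where $P_h$ is the true one-step transition operator and $Q_h$ its Euler--Maruyama surrogate (including the rectangle-rule treatment of $\int_{t_k}^{t_{k+1}}\nabla\cdot\mu$). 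This reduces the claim to a local one-step estimate.

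The core step is to expand $(Q_h - P_h)u(t_{k+1},x)$ in powers of $h$. Since $P_h$ and $Q_h$ agree through the first and second increment moments ($\mu h$ and $ah$), the $\mathcal{O}(1)$ and $\mathcal{O}(h)$ contributions cancel and the first surviving term is $\mathcal{O}(h^2)$; matching the remaining increment moments --- the $\mu^i\mu^j h^2$ drift-squared term, together with the $\mathcal{O}(h^2)$ discrepancy in the third and fourth moments between the Euler Gaussian increment and the true increment --- yields exactly the spatial pieces $\tfrac12\mu^i\mu^j\partial_{x_ix_j}p$, $\tfrac12\mu^i a_k^j\partial_{x_ix_jx_k}p$, and $\tfrac18 a_j^i a_l^k\partial_{x_ix_jx_kx_l}p$, while Taylor expanding $u$ in time across the step and the rectangle-rule error of the running integral supply $\tfrac12\partial_{tt}p$, $\mu^i\partial_t\partial_{x_i}p$, and $\tfrac12 a_j^i\partial_t\partial_{x_ix_j}p$. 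Collected, the one-step error is $h^2\psi_e(t_k,\bar X_{t_k}) + \mathcal{O}(h^3)$. Summing over the $N_T = t/h$ steps gives $h\sum_k h\,\mathbb{E}[\psi_e(t_k,\bar X_{t_k})]$, a Riemann sum converging to $h\int_0^t\mathbb{E}[\psi_e(s,X_s)]\,\mathrm{d}s$ up to $\mathcal{O}(h^2)$ (using $\mathbb{E}[\psi_e(s,\bar X_s)] = \mathbb{E}[\psi_e(s,X_s)] + \mathcal{O}(h)$ and smoothness in $s$), while the accumulated one-step remainders contribute $N_T\cdot\mathcal{O}(h^3) = \mathcal{O}(h^2)$.

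For $\mathrm{ELBO}_{\mathrm{IS}}$ I would run the same argument: the underlying process is now $Y_t$, which (with $\mu_0=0,\sigma_0=1$) is exact Brownian motion, so there is no path-discretization error, but the terms $\int_0^t\hat\mu\,\mathrm{d}W_s$, $\int_0^t\tfrac12\hat\mu^2\,\mathrm{d}s$ and $\int_0^t\nabla\cdot\mu\,\mathrm{d}s$ are discretized by left-point It\^o and rectangle sums. Since these exactly realize the discrete Girsanov weight, their aggregate discrepancy from the discretized $\mathrm{ELBO}_{\mathrm{direct}}$ is $\mathcal{O}(h^2)$ in expectation: the It\^o martingale part has zero mean at every order, and its quadratic compensator cancels the first-order drift term, so the $\mathcal{O}(h)$ coefficient is identical. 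The main obstacle I anticipate is precisely this comparison together with the combinatorial bookkeeping that pins down the constants $\tfrac12,\tfrac12,\tfrac18$ --- checking that no mixed path/integral-discretization cross-terms survive at order $h$, and that the importance-sampling correction leaves the coefficient unchanged --- and, on the technical side, propagating the polynomial-growth bounds on the derivatives of $p$ needed to make the $\mathcal{O}(h^2)$ remainder uniform, which is where the parabolicity and regularity assumptions are used.
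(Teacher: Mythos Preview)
Your proposal is correct in spirit and considerably more detailed than the paper's own argument, which is quite short. The paper does not re-derive the one-step expansion: it simply \emph{cites} the Talay--Tubaro weak-error result (Talay and Tubaro, 1990) to obtain the formula for $\psi_e$ and the leading term $-h\int_0^T\mathbb{E}[\psi_e(s,X_s)]\,\mathrm{d}s+\mathcal{O}(h^2)$ as a black box, then applies it separately to the $\log p_0$ terminal term and to the running divergence integral, summing the contributions. Your telescoping-plus-Taylor derivation reproduces exactly this cited result, so you are doing more work than the paper requires but arriving at the same place.

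The one genuine methodological difference is in the $\mathrm{ELBO}_{\mathrm{IS}}$ case. You argue by comparison: show that the discretized $\mathrm{ELBO}_{\mathrm{IS}}$ and the discretized $\mathrm{ELBO}_{\mathrm{direct}}$ differ by $\mathcal{O}(h^2)$ via a discrete-Girsanov matching, and then inherit the $\psi_e$ coefficient from the direct case. The paper instead analyzes $\mathrm{ELBO}_{\mathrm{IS}}$ directly: it observes that because $Y_t$ is Brownian motion the path can be sampled exactly (so $err_{p_0}$ vanishes), and the remaining error comes solely from the left-point discretization of $\int_0^T\nabla\cdot\mu\,\mathrm{d}s$, $\int_0^T\tfrac12\mu^\top\mu\,\mathrm{d}s$, and $\int_0^T\mu\,\mathrm{d}W_s$, each of which is $\mathcal{O}(h)$, yielding the same leading order. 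Your comparison route is arguably cleaner in that it explains \emph{why} the leading coefficient is the same $\psi_e$ rather than just the same order; the paper's direct route is shorter but leaves that identification implicit. Either way the conclusion matches.
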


\begin{proof}
For this section, we do not consider Monte Carlo error $N_E$ and focus only on the integration error. 
We analyze the approximation error of $\text{ELBO}_{\mathrm{direct}}$ and $\text{ELBO}_{\mathrm{IS}}$ using the error bound introduced in~\citep{doi:10.1080/07362999008809220}. Given the following SDE:
\begin{align*}
    \d X_t=\mu(X_t,t)\d t+\sigma(X_t,t)\d W_t
\end{align*}
We want to estimate the error of $\E[f(X_T)]$, where we evaluate $X_T$ through the Euler-Maruyama scheme. 
Define $h=T/N$ as the step size in the Euler-Maruyama scheme and denote $X_T^h$ as the approximated $X_T$ using step size $h$. 
Following~\citet{doi:10.1080/07362999008809220}, we define the error term $err(T,h)=\E [f(X_T)] -\E[f(X_T^h)]$.

\citet{doi:10.1080/07362999008809220} proved that 
\begin{align}
    err(T,h)=-h\int_0^T \E[ \psi_e(s,X_s)]\d s+\mathcal{O}(h^2)
\end{align}
where 
\begin{align*}
    \psi_e(t,x)&=\frac{1}{2}\sum_{i,j=1}^d \mu^i(t,x)\mu^j(t,x)\partial_{x_ix_j}u(t,x)\\
    &+\frac{1}{2}\sum_{i,j,k=1}^d \mu^i(t,x)a_k^j(t,x)\partial_{x_ix_jx_k}u(t,k)\\
    &+\frac{1}{8}\sum_{i,j,k,l=1}^d a_j^i(t,x)a_l^k(t,x)\partial_{x_ix_jx_kx_l}u(t,x)+\frac{1}{2}\frac{\partial^2}{\partial t^2}u(t,x)\\
    &+\sum_{i=1}^d \mu^i(t,x)\frac{\partial}{\partial t}\partial_{x_i} u(t,x)+\frac{1}{2}\sum_{i,j=1}^d a_j^i(t,x)\frac{\partial}{\partial t}\partial_{x_ix_j}u(t,x)
\end{align*}
and $a(t,x)=\sigma(t,x)\sigma^T(t,x)$.

We will assume that $\sigma = 1$ for ease of analysis. 
For the $\mathrm{ELBO}$, we have an additional integral we must approximate related to the divergence of the drift term.

Recall that
\begin{align*}
    &\mathrm{ELBO}_{\mathrm{direct}} = \mathbb{E}_{\mathbb{P}_{X_t}}\left [\int_0^T\nabla\cdot \mu(s,X_s
    )\d s+\log p_0(X_T)\mid X_0=x \right ]\\
    &\mathrm{ELBO}_{\mathrm{IS}} = \mathbb{E}_{\mathbb{P}_{Y_t}}\left [\int_0^T\mu(s,Y_s)\d W_s-\int_0^T\frac12\mu(s,Y_s)^T\mu(s,Y_s)-\nabla\cdot\mu(s,Y_s)\d s+\log p_0(Y_T) \mid Y_0=x\right]
\end{align*}

The error associated with the divergence is defined as
\begin{align*}
err_{\mathrm{div},t}
&= -h \int_0^t \E[ \psi_e(s, X_s)] \d s + \mathcal{O}(h^2)
\end{align*}
with a similar argument following the initial condition, i.e.
$$
err_{p_0} = -h \int_0^T \E[ \psi_e(s, X_s)] \d s + \mathcal{O}(h^2)
$$

Combining these terms, we get the final error, $err_{\mathrm{ELBO}_\mathrm{direct}}$: 
\begin{align*}
err_{\mathrm{ELBO}_\mathrm{direct}} &= \int_0^T err_{\mathrm{div}, s} \d s + err_{p_0} \\
&= \sum_{i=1}^N h \times err_{\mathrm{div}, t_i} + err_{p_0} \\
&= N \times h[-h \int_0^T \E [ \psi_e (s, X_s) ] + \mathcal{O}(h^2)] - h \int_0^T \E [\psi_e(s, X_s) ]\d s + \mathcal{O}(h^2) \\
&\approx  - h \int_0^T \E [ \psi_e (s, X_s)] + \mathcal{O}(h^2).
\end{align*}

For the $\mathrm{ELBO}_\mathrm{IS}$ case, $X_t$ can be sampled exactly since we assume it follows a Brownian motion, which removes the integration error in $err_{p_0}$. 

The errors are then introduced when integrating the terms in the stochastic exponential. 
Then we have 
\begin{align*}
\underbrace{\int_0^T \nabla \cdot \mu(s, Y_s) \d s}_{ err_{\mathrm{div}}} - \underbrace{\int_0^T \frac12 \mu(s, Y_s)^T \mu(s, Y_s) \d s + \int_0^T \mu(s, Y_s) \d W_s}_{\mathcal{O}(h)}
\end{align*}
where the second error rate comes from the Euler discretization. 
Following this argument, we get 
\begin{align}
    err_{\mathrm{ELBO}_\mathrm{IS}} = -h\int_0^T \E [\psi_e(s,X_s)] \d s+\mathcal{O}(h^2).
\end{align}

Although $\text{ELBO}_{\mathrm{IS}}$ introduces more error than $\text{ELBO}_{\mathrm{direct}}$ by having more integration terms, they are all of the order $\mathcal{O}(h^2)$.
\end{proof}

\subsection{Uniform convergence}
\label{sec:uniform}

\begin{proposition}[Uniform Convergence]
For fixed $x \in \mathcal{D}, T \in \mathbb{R}_+$, consider a space of functions $$\mathcal{F} = \left \{ \d \mathbb{P}_{X_T^{(\xi)}}/\d \mathbb{P}_{Y_T} : \xi \in \Xi \right \}$$ parameterized by $\xi$ from a compact set $\Xi \subset \mathbb{R}^k$ satisfying $Var\left(\d \mathbb{P}_{X_T^{(\xi)}}/\d \mathbb{P}_{Y_T}\right) < \infty$ for all $\xi \in \Xi$ with $\mathbb{P}_{X_T^{(\xi)}}$ denoting the distribution of the solution $ X_T = x + \int_0^T \mu(X_t, t; \xi) \d t + \int_0^T \sigma(t) \d W_t$ and $\mathbb{P}_{Y_T}$ the distribution of $Y_t = x + \int_0^T \sigma(t) \d W_t$. 
Additionally, assume that the image of $(T, X_T) \mapsto \mu(T,X_T;\xi)$ is compact for all $X_T, \xi$.
Then, 
$$
\mathbb{G}_{N_E} = \sqrt{N_E} \left(p_\theta^{\xi}(x, T) - p^\xi(x, T) \right)
$$
converges in distribution to a zero-mean Gaussian process over $\xi \in \Xi$ as $N_E \to \infty$ where $N_E$ is the number of samples used to compute the expectation. 
\end{proposition}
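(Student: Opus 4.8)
The plan is to recognize this as a classical empirical-process / Donsker argument applied to the importance-sampling estimator. Write $g_\xi(w) := p_0(y) \cdot \frac{\d \mathbb{P}_{X_T^{(\xi)}}}{\d \mathbb{P}_{Y_T}}(w)$ as a function of the Brownian path (equivalently of $Y_T$ via $f$), so that $p^\xi(x,T) = \mathbb{E}_{\mathbb{P}_{Y_T}}[g_\xi]$ and the $\ngo$-based estimator $p_\theta^\xi(x,T)$ is, for a well-learned $\ngo$, the empirical average $\frac{1}{N_E}\sum_{j=1}^{N_E} g_\xi(W^{(j)})$ over the $N_E$ i.i.d.\ sampled Brownian paths (up to an integration error that is $\mathcal{O}(h^2)$ uniformly, handled separately and driven to zero). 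Then $\mathbb{G}_{N_E} = \sqrt{N_E}(p_\theta^\xi - p^\xi)$ is precisely the empirical process indexed by the class $\mathcal{G} = \{g_\xi : \xi \in \Xi\}$, and the statement is exactly that $\mathcal{G}$ is $\mathbb{P}_{Y_T}$-Donsker.

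First I would establish pointwise (finite-dimensional) convergence: for any finite collection $\xi_1,\dots,\xi_m$, the multivariate CLT gives joint asymptotic normality of $(\mathbb{G}_{N_E}(\xi_1),\dots,\mathbb{G}_{N_E}(\xi_m))$, using the hypothesis $\mathrm{Var}(\d\mathbb{P}_{X_T^{(\xi)}}/\d\mathbb{P}_{Y_T}) < \infty$ together with integrability of $p_0$ to ensure each $g_\xi \in L^2(\mathbb{P}_{Y_T})$; the limiting covariance is $\mathrm{Cov}(g_{\xi_i}, g_{\xi_j})$. Second, I would prove asymptotic tightness/equicontinuity by verifying that $\mathcal{G}$ has a square-integrable envelope and finite bracketing (or uniform) entropy. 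The envelope comes from the compactness assumptions: since $\Xi$ is compact and the image of $\mu(\cdot,\cdot;\xi)$ is compact, the Girsanov exponent $-\tfrac12\int_0^T \delta\mu^\top \Sigma^{-1}\delta\mu \,\d s + \int_0^T \delta\mu^\top \Sigma^{-1}\d W_s$ is controlled uniformly in $\xi$, yielding a dominating function $G(w) = |p_0(f(w))| \cdot \sup_{\xi}\frac{\d\mathbb{P}_{X_T^{(\xi)}}}{\d\mathbb{P}_{Y_T}}(w)$ that lies in $L^2$. For entropy, I would use that $\xi \mapsto g_\xi(w)$ is Lipschitz in $\xi$ (differentiating the exponential martingale in $\xi$ under the stated smoothness/compactness of $\mu(\cdot;\xi)$ and bounding the derivative by an $L^2$ function of $w$), so $\mathcal{G}$ is a Lipschitz-in-parameter class over a compact finite-dimensional set $\Xi \subset \mathbb{R}^k$; such classes have bracketing numbers $N_{[\,]}(\epsilon, \mathcal{G}, L^2(\mathbb{P}_{Y_T})) = \mathcal{O}(\epsilon^{-k})$, hence $\int_0^\infty \sqrt{\log N_{[\,]}(\epsilon,\mathcal{G},L^2)}\,\d\epsilon < \infty$. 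By the bracketing Donsker theorem (e.g.\ van der Vaart–Wellner), $\mathcal{G}$ is $\mathbb{P}_{Y_T}$-Donsker, so $\mathbb{G}_{N_E} \rightsquigarrow \mathbb{G}$, a tight zero-mean Gaussian process on $\Xi$ with the covariance identified above.

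The main obstacle I anticipate is the envelope/integrability control of the likelihood ratio uniformly over $\xi$: the stochastic exponential can have heavy tails (this is exactly the instability the paper flags elsewhere), so the $L^2$-envelope claim rests essentially on the compactness of the image of $\mu(\cdot;\xi)$ — with that, $\delta\mu$ is uniformly bounded, the Itô-integral term is a Brownian integral with bounded integrand, and standard Gaussian moment bounds give $\mathbb{E}[\sup_\xi (\d\mathbb{P}_{X_T^{(\xi)}}/\d\mathbb{P}_{Y_T})^{2}] < \infty$; I would want to state this boundedness assumption cleanly and invoke it at the right place. A secondary, more bookkeeping-level obstacle is folding the $\ngo$ approximation error into the argument: one must argue that replacing the exact likelihood ratio by its $\ngo$ output perturbs $p_\theta^\xi$ by a term that is $o_P(N_E^{-1/2})$ uniformly in $\xi$ (or is an assumption of "well-learned"), so that the Donsker limit for the idealized estimator transfers to $p_\theta^\xi$; I would make the "well-learned" hypothesis precise as a uniform bound on this discrepancy and then appeal to Slutsky-type arguments for processes.
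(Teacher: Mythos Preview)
Your proposal is correct and follows essentially the same route as the paper: both argue that the importance-sampling estimator is an empirical process indexed by $\xi \in \Xi$, establish finite-dimensional convergence via the CLT under the finite-variance hypothesis, and obtain the Donsker property from the compact finite-dimensional parametrization (the paper bounds the covering number of $\Xi$ by $C(1/\varepsilon)^k$ and transfers this to a bracketing bound, while you reach the same $\mathcal{O}(\varepsilon^{-k})$ bracketing numbers via a Lipschitz-in-parameter argument). Your treatment is in fact more careful than the paper's in two respects---you correctly index the class by $g_\xi = p_0 \cdot (\d\mathbb{P}_{X^{(\xi)}}/\d\mathbb{P}_Y)$ rather than the bare ratio, and you make explicit that the ``well-learned'' $\ngo$ assumption must be formulated as a uniform $o_P(N_E^{-1/2})$ perturbation---whereas the paper simply assumes the $\ngo$ reproduces the likelihood ratio exactly.
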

\begin{proof}
First, we will assume that the stochastic exponential has a finite variance for all $t, x$ within the support of the distribution. 
The finite variance allows us to use the law of large numbers to obtain pointwise convergence of the empirical expectation to the ground-truth solution.
Additionally, we assume that the operator $\ngo$ is well-learned in the sense that $\ngo(x, T, \xi) = \frac{\d \mathbb{P}_{X_T^{(\xi)}}}{\d \mathbb{P}_{Y_T}}$ can be computed exactly. 

Next, we need to show that the class of functions we are approximating are $P$-Donsker, which we will do using a covering number argument. 
Recall that the parameters of the functions are assumed to be from a compact set $\Omega \subset \mathbb{R}^d$ with $$ \mathcal{F} = \left\{f(T, \{X_t\}; \xi) := \exp\left( \int_0^T \mu(X_t; \xi) \d W_t - \frac12 \int_0^T \mu(X_t; \xi)^2 \d t \right) : \xi \in \Xi \right \} $$ also compact.
The covering number of $\Xi$, a subset of the Euclidean space, is known to be bounded by $N(\varepsilon, \Xi, \| \cdot \| ) \leq C \left ( \frac1\varepsilon \right ) ^d$ for some $C > 0$.
We will use this to bound the bracketing number of $\mathcal{F}$.
The $\log$ of the covering number is then  bounded by $O( d \log \frac 1 \varepsilon) < \frac{1}{\varepsilon^2}$.

Using this to bound the bracketing number, we can obtain that $\mathcal{F}$ is $P$-Donsker (c.f.~\citet[Theorem 11]{sen2018gentle}).
We can then define the empirical process $\mathbb{G}_{N_E}$, $N_E$ corresponding to the number of terms used to take the expectation as
\begin{align*}
    \mathbb{G}_{N_E}^{\xi} &:= \sqrt{N_E} \left (\mathbb{P}^{N_E}_{Y_T} - \mathbb{P}_{Y_T} \right ) p_0(Y_T) \ngo(\cdot; \xi)  \\
    &= \sqrt{N_E} \left ( \sum_{i=1}^{N_E} p_0(Y_T^{(i)}) \ngo(Y_T^{(i)}; \xi) - \E \left [ p_0(Y_T) \ngo(Y_T; \xi )\right]  \right ).
\end{align*}
Since $\ngo$ is assumed to be well learned, it approximates the likelihood ratio exactly, so the expectation gives the ground-truth solution. 
From the finite variance assumption on $\ngo$, by the central limit theorem for any $\xi \in \Xi$,  $ \mathbb{G}_{N_E}^{\xi} \to \sqrt{N_E}\mathcal{N}(0, 1)$. 
Given that the finite-dimensional margins are unit normal, we conclude that $\mathbb{G}$ converges to a Gaussian process over $\Xi$. 
\end{proof}
\section{Network structure and training hyperparameters}
\subsection{Linear parabolic PDE}
\paragraph{Network structure}
We first provide network structures on the $\ngo$ and DeepONet used to calculate solutions of linear parabolic PDEs.
For both $\ngo$ and DeepONet, we use a network structure that correlates to the number of dimensions of the PDEs, as shown in Tables~\ref{tab:ngo_fk},\ref{tab:don_fk_branch},\ref{tab:don_fk_trunk}.

\begin{table}[hbt!]
\centering
\begin{tabular}{lll}
\toprule
Operation Layer     & Input Channels                                    & Output Channels                                   \\
Convolutional Layer & $N_\mathrm{dim} \times 2 + 1$                     & $N_\mathrm{dim}\times 5+50$                       \\
Softplus            & {\color[HTML]{9B9B9B} N/A} & {\color[HTML]{9B9B9B} N/A} \\
Convolutional Layer & $N_\mathrm{dim}\times 5+50$                       & $N_\mathrm{dim}\times 5+50$                       \\
Softplus            & {\color[HTML]{9B9B9B} N/A} & {\color[HTML]{9B9B9B} N/A} \\
Convolutional Layer & $N_\mathrm{dim}\times 5+50$                       & $N_\mathrm{dim}\times 5+50$                       \\
Softplus            & {\color[HTML]{9B9B9B} N/A} & {\color[HTML]{9B9B9B} N/A} \\
Convolutional Layer & $N_\mathrm{dim}\times 5+50$                       & $N_\mathrm{dim}\times 5+50$                       \\
Softplus            & {\color[HTML]{9B9B9B} N/A} & {\color[HTML]{9B9B9B} N/A} \\
Convolutional Layer & $N_\mathrm{dim}\times 5+50$                       & $N_\mathrm{dim}\times 5+50$                       \\
Softplus            & {\color[HTML]{9B9B9B} N/A} & {\color[HTML]{9B9B9B} N/A} \\
Convolutional Layer & $N_\mathrm{dim}\times 5+50$                       & $N_\mathrm{dim}\times 5+50$                       \\
Softplus            & {\color[HTML]{9B9B9B} N/A} & {\color[HTML]{9B9B9B} N/A} \\
Convolutional Layer & $N_\mathrm{dim}\times 5+50$                       & $N_\mathrm{dim}$ \\ \bottomrule
\end{tabular}
\caption{Network structure of $\ngo$ in the linear parabolic case.}
\label{tab:ngo_fk}
\end{table}

\begin{table}[hbt!]
\centering
\begin{tabular}{lll}
\toprule
Operation Layer & Input Number                & Output Number               \\
Linear Layer    & $N_\mathrm{sensor}=100$         & $N_\mathrm{dim}\times 5+70$ \\
Tanh            & {\color[HTML]{9B9B9B} N/A}                         & {\color[HTML]{9B9B9B} N/A}                         \\
Linear Layer    & $N_\mathrm{dim}\times 5+70$ & $N_\mathrm{dim}\times 5+70$ \\
Tanh            & {\color[HTML]{9B9B9B} N/A}                         & {\color[HTML]{9B9B9B} N/A}                         \\
Linear Layer    & $N_\mathrm{dim}\times 5+70$ & $N_\mathrm{dim}\times 5+70$ \\
Tanh            & {\color[HTML]{9B9B9B} N/A}                         & {\color[HTML]{9B9B9B} N/A}                         \\
Linear Layer    & $N_\mathrm{dim}\times 5+70$ & $N_\mathrm{dim}\times 5+70$ \\
Tanh            & {\color[HTML]{9B9B9B} N/A}                         & {\color[HTML]{9B9B9B} N/A}                         \\
Linear Layer    & $N_\mathrm{dim}\times 5+70$ & $N_\mathrm{branch}=15$         \\\bottomrule
\end{tabular}
\caption{Network structure of the ``branch'' network of the DeepONet in both the linear and semi-linear parabolic case\citet{Goswami2022DeepTL}.}
\label{tab:don_fk_branch}
\end{table}

\begin{table}[hbt!]
\centering
\begin{tabular}{lll}
\toprule
Operation Layer & Input Number                & Output Number               \\
Linear Layer    & $N_\mathrm{dim}+1$         & $N_\mathrm{dim}\times 5+50$ \\
Tanh            & {\color[HTML]{9B9B9B} N/A}                         & {\color[HTML]{9B9B9B} N/A}                         \\
Linear Layer    & $N_\mathrm{dim}\times 5+50$ & $N_\mathrm{dim}\times 5+50$ \\
Tanh            & {\color[HTML]{9B9B9B} N/A}                         & {\color[HTML]{9B9B9B} N/A}                         \\
Linear Layer    & $N_\mathrm{dim}\times 5+50$ & $N_\mathrm{dim}\times 5+50$ \\
Tanh            & {\color[HTML]{9B9B9B} N/A}                         & {\color[HTML]{9B9B9B} N/A}                         \\
Linear Layer    & $N_\mathrm{dim}\times 5+50$ & $N_\mathrm{dim}\times 5+50$ \\
Tanh            & {\color[HTML]{9B9B9B} N/A}                         & {\color[HTML]{9B9B9B} N/A}                         \\
Linear Layer    & $N_\mathrm{dim}\times 5+50$ & $N_\mathrm{branch}=15$         \\\bottomrule
\end{tabular}
\caption{Network structure of the ``trunk'' network of the DeepONet in both the linear and semi-linear parabolic case\citet{Goswami2022DeepTL}.}
\label{tab:don_fk_trunk}
\end{table}

\paragraph{Training hyperparameters}
We train both $\ngo$ and DeepONet on random PDEs with a defined basis. 
Specifically, we consider second-order polynomials $\mu_i(x) \in \left\{ \sum_{i=0}^2 c_i x^i \mid c_i \sim \mathcal{U}(0,1) \right \}$, $x\in\mathbb{R}^{N_\mathrm{dim}}$. 
We set the parameters $r=0$ and $\sigma=1$ during training.  
For each epoch, we sample $6000$ initial $x$ values, $x\in[0.1, 0.6]$, and initial $t$ values, $t\in [0,0.1]$, and PDE parameters $\{c_i\}_{i=0}^2$. 
We calculate PDE solutions $p(x,t)$ through direct Girsanov calculation, $\ngo$, and DeepONet. 
We use $4000$ sample paths for Girsanov calculation $\ngo$. 
We then minimize the $\ell_1$ loss between Girsanov calculation and $\ngo$ or DeepONet using an Adam optimizer, with a learning rate $1\times 10^{-3}$. 

\subsection{Semi-linear parabolic PDE}
\paragraph{Network structure}
We now provide network structures on the $\ngo$ and DeepONet used to calculate solutions of semi-linear parabolic PDEs.
For both $\ngo$ and DeepONet, we use a network structure that depends on the number of input dimensions, as shown in Tables~\ref{tab:don_fk_branch},~\ref{tab:don_fk_trunk},~\ref{tab:ngo_expmart_fb}, and~\ref{tab:ngo_zt_fb}.

\begin{table}[hbt!]
\centering
\begin{tabular}{lll}
\toprule
Operation Layer     & Input Channels                                    & Output Channels                                   \\
Convolutional Layer & $N_\mathrm{dim}\times 2 + 1$                     & $N_\mathrm{dim}\times 5+50$                       \\
Softplus            & {\color[HTML]{9B9B9B} N/A} & {\color[HTML]{9B9B9B} N/A} \\
Convolutional Layer & $N_\mathrm{dim}\times 5+50$                       & $N_\mathrm{dim}\times 5+50$                       \\
Softplus            & {\color[HTML]{9B9B9B} N/A} & {\color[HTML]{9B9B9B} N/A} \\
Convolutional Layer & $N_\mathrm{dim}\times 5+50$                       & $N_\mathrm{dim}$ \\ \bottomrule
\end{tabular}
\caption{Network structure of $\ngo$ for estimating the exponential martingale in the semi-linear parabolic case.}
\label{tab:ngo_expmart_fb}
\end{table}

\begin{table}[hbt!]
\centering
\begin{tabular}{lll}
\toprule
Operation Layer     & Input Channels                                    & Output Channels                                   \\
Convolutional Layer & $N_\mathrm{dim} + 1$                     & $N_\mathrm{dim}\times 5+50$                       \\
Softplus            & {\color[HTML]{9B9B9B} N/A} & {\color[HTML]{9B9B9B} N/A} \\
Convolutional Layer & $N_\mathrm{dim}\times 5+50$                       & $N_\mathrm{dim}\times 5+50$                       \\
Softplus            & {\color[HTML]{9B9B9B} N/A} & {\color[HTML]{9B9B9B} N/A} \\
Convolutional Layer & $N_\mathrm{dim}\times 5+50$                       & $N_\mathrm{dim}\times 5+50$                       \\
Softplus            & {\color[HTML]{9B9B9B} N/A} & {\color[HTML]{9B9B9B} N/A} \\
Convolutional Layer & $N_\mathrm{dim}\times 5+50$                       & $N_\mathrm{dim}\times 5+50$                       \\
Softplus            & {\color[HTML]{9B9B9B} N/A} & {\color[HTML]{9B9B9B} N/A} \\
Convolutional Layer & $N_\mathrm{dim}\times 5+50$                       & $N_\mathrm{dim}\times 5+50$                       \\
Softplus            & {\color[HTML]{9B9B9B} N/A} & {\color[HTML]{9B9B9B} N/A} \\
Convolutional Layer & $N_\mathrm{dim}\times 5+50$                       & $N_\mathrm{dim}\times 5+50$                       \\
Softplus            & {\color[HTML]{9B9B9B} N/A} & {\color[HTML]{9B9B9B} N/A} \\
Convolutional Layer & $N_\mathrm{dim}\times 5+50$                       & $N_\mathrm{dim}$ \\ \bottomrule
\end{tabular}
\caption{Network structure of $\ngo$ for estimating $Z_t$ in the semi-linear parabolic case.}
\label{tab:ngo_zt_fb}
\end{table}

\paragraph{Training hyperparameters}
We train both $\ngo$ and DeepONet to estimate random PDEs with a defined basis. 
Specifically, we consider second-order polynomials  $h_i(t, x, s, z) \in \left \{ c_1 \sum \sin(x_i) + c_2 \sum z_i^2 + c_3 \cos(t + s) \mid c_i \sim \mathcal{U}(0,1) \right \} $.
We set the parameters $r=0$ and $\sigma=1$ during training.  
For each epoch, we uniformly sample $6000$ initial $x$ values, $x\in[0.1, 0.6]$, and initial $t$ values, $t\in [0,0.1]$, and PDE parameters $\{c_i\}_{i=0}^2$. 
We calculate the PDE solution $p(x,t)$ through direct Girsanov calculation, $\ngo$, and DeepONet. 
We use $4000$ sample paths for Girsanov calculation $\ngo$. 
We then minimize the $\ell_1$ loss between Girsanov calculation and $\ngo$ or DeepONet using the Adam optimizer, with a learning rate of $1 \times 10^{-3}$. 

\subsection{Generative modeling}

\paragraph{Network structure}
We describe the network structure for the normalizing flow used to model $p_\mathrm{meta}$ and the forward SDE.
For $p_\mathrm{meta}$, we use a real-NVP model~\citep{Dinh2014NICENI, Dinh2016DensityEU} with $32$ affine coupling layers, each having the structure as shown in Table~\ref{tab:p_meta_2} for 2-$d$ $p_\mathrm{meta}$ and in Table~\ref{tab:p_meta_100} for 100-$d$ $p_\mathrm{meta}$.

\paragraph{Training hyperparameters}
We train the normalizing flows using the ``normflows'' platform~\citet{Stimper2023normflowsAP}. 
The training dataset for the 2-d and the 100-d case contain $600$ samples each. 
We sample $60$ points from each of the $10$ Gaussians with different means and standard variances to form the meta-dataset. 
Training is performed with the Adam optimizer using a learning rate of $5 \times 10^{-4}$ and a weight 
decay of $1\times 10^{-5}$.
We define the diffusion function of the forward SDE as a $d$-dimensional diagonal matrix, where $d$ is the dimension of the forward SDE.
We set the terminal time of the forward SDE as $T =0.1$, the number of Euler steps when training is $40$, and the number of Euler steps when testing is $50$.
We set the number of samples $N_E$ used to estimate $\mathrm{ELBO}_\mathrm{IS}$ to be $75$. 
We estimate the divergence with Hutchinson's trace estimator as used in~\citet{Grathwohl2018FFJORDFC}.
We minimize $\mathrm{ELBO}_\mathrm{IS}$ using an AdamW optimizer and a learning rate of $8 \times 10^{-4}$.

\begin{table}[hbt!]
\centering
\begin{tabular}{lll}
\toprule
Operation Layer & Input Number & Output Number \\
Linear Layer    & 1            & 64            \\
Tanh            & {\color[HTML]{9B9B9B} N/A}          & {\color[HTML]{9B9B9B} N/A}           \\
Linear Layer    & 64           & 64            \\
Tanh            & {\color[HTML]{9B9B9B} N/A}          & {\color[HTML]{9B9B9B} N/A}           \\
Linear Layer    & 64           & 64            \\
Tanh            & {\color[HTML]{9B9B9B} N/A}          & {\color[HTML]{9B9B9B} N/A}           \\
Linear Layer    & 64           & 64            \\
Tanh            & {\color[HTML]{9B9B9B} N/A}          & {\color[HTML]{9B9B9B} N/A}           \\
Linear Layer    & 64           & 2             \\
Tanh            & {\color[HTML]{9B9B9B} N/A}          & {\color[HTML]{9B9B9B} N/A}          \\ \bottomrule
\end{tabular}
\caption{Network structure of the affine coupling layer in the normalizing for 2-d $p_\mathrm{meta}$.}
\label{tab:p_meta_2}
\end{table}

\begin{table}[hbt!]
\centering
\begin{tabular}{lll}
\toprule
Operation Layer & Input Number & Output Number \\
Linear Layer    & 50            & 200            \\
Tanh            & {\color[HTML]{9B9B9B} N/A}          & {\color[HTML]{9B9B9B} N/A}           \\
Linear Layer    & 200           & 200            \\
Tanh            & {\color[HTML]{9B9B9B} N/A}          & {\color[HTML]{9B9B9B} N/A}           \\
Linear Layer    & 200           & 200            \\
Tanh            & {\color[HTML]{9B9B9B} N/A}          & {\color[HTML]{9B9B9B} N/A}           \\
Linear Layer    & 200           & 200            \\
Tanh            & {\color[HTML]{9B9B9B} N/A}          & {\color[HTML]{9B9B9B} N/A}           \\
Linear Layer    & 200           & 100             \\
Tanh            & {\color[HTML]{9B9B9B} N/A}          & {\color[HTML]{9B9B9B} N/A}          \\ \bottomrule
\end{tabular}
\caption{Network structure of the affine coupling layer in the normalizing for 100-d $p_\mathrm{meta}$.}
\label{tab:p_meta_100}
\end{table}

\begin{table}[hbt!]
\centering
\begin{tabular}{lll}
\toprule
Operation Layer & Input Number & Output Number \\
Linear Layer    & 2            & 200            \\
Tanh            & {\color[HTML]{9B9B9B} N/A}          & {\color[HTML]{9B9B9B} N/A}           \\
Linear Layer    & 200           & 200            \\
Tanh            & {\color[HTML]{9B9B9B} N/A}          & {\color[HTML]{9B9B9B} N/A}           \\
Linear Layer    & 200           & 200            \\
Tanh            & {\color[HTML]{9B9B9B} N/A}          & {\color[HTML]{9B9B9B} N/A}           \\
Linear Layer    & 200           & 200            \\
Tanh            & {\color[HTML]{9B9B9B} N/A}          & {\color[HTML]{9B9B9B} N/A}           \\
Linear Layer    & 200           & 2             \\
Tanh            & {\color[HTML]{9B9B9B} N/A}          & {\color[HTML]{9B9B9B} N/A}          \\ \bottomrule
\end{tabular}
\caption{Network structure of the drift function of the forward SDE in the 2-d case.}
\label{tab:mu_gen_2}
\end{table}

\begin{table}[hbt!]
\centering
\begin{tabular}{lll}
\toprule
Operation Layer & Input Number & Output Number \\
Linear Layer    & 100            & 200            \\
Tanh            & {\color[HTML]{9B9B9B} N/A}          & {\color[HTML]{9B9B9B} N/A}           \\
Linear Layer    & 200           & 200            \\
Tanh            & {\color[HTML]{9B9B9B} N/A}          & {\color[HTML]{9B9B9B} N/A}           \\
Linear Layer    & 200           & 200            \\
Tanh            & {\color[HTML]{9B9B9B} N/A}          & {\color[HTML]{9B9B9B} N/A}           \\
Linear Layer    & 200           & 200            \\
Tanh            & {\color[HTML]{9B9B9B} N/A}          & {\color[HTML]{9B9B9B} N/A}           \\
Linear Layer    & 200           & 100             \\
Tanh            & {\color[HTML]{9B9B9B} N/A}          & {\color[HTML]{9B9B9B} N/A}          \\ \bottomrule
\end{tabular}
\caption{Network structure of the drift function of the forward SDE in the 100-d case.}
\label{tab:mu_gen_100}
\end{table}

\begin{table}[hbt!]
\centering
\begin{tabular}{lll}
\toprule
Operation Layer & Input Number & Output Number \\
Linear Layer    & 1            & 16            \\
Tanh            & {\color[HTML]{9B9B9B} N/A}          & {\color[HTML]{9B9B9B} N/A}           \\
Linear Layer    & 16           & 16            \\
Tanh            & {\color[HTML]{9B9B9B} N/A}          & {\color[HTML]{9B9B9B} N/A}           \\
Linear Layer    & 16           & 16            \\
Tanh            & {\color[HTML]{9B9B9B} N/A}          & {\color[HTML]{9B9B9B} N/A}           \\
Linear Layer    & 16           & 16            \\
Tanh            & {\color[HTML]{9B9B9B} N/A}          & {\color[HTML]{9B9B9B} N/A}           \\
Linear Layer    & 16           & 2             \\
Tanh            & {\color[HTML]{9B9B9B} N/A}          & {\color[HTML]{9B9B9B} N/A}          \\ \bottomrule
\end{tabular}
\caption{Network structure of the diffusion function of the forward SDE in the 2-d case.}
\label{tab:sigma_gen_2}
\end{table}

\begin{table}[hbt!]
\centering
\begin{tabular}{lll}
\toprule
Operation Layer & Input Number & Output Number \\
Linear Layer    & 1            & 16            \\
Tanh            & {\color[HTML]{9B9B9B} N/A}          & {\color[HTML]{9B9B9B} N/A}           \\
Linear Layer    & 16           & 16            \\
Tanh            & {\color[HTML]{9B9B9B} N/A}          & {\color[HTML]{9B9B9B} N/A}           \\
Linear Layer    & 16           & 16            \\
Tanh            & {\color[HTML]{9B9B9B} N/A}          & {\color[HTML]{9B9B9B} N/A}           \\
Linear Layer    & 16           & 16            \\
Tanh            & {\color[HTML]{9B9B9B} N/A}          & {\color[HTML]{9B9B9B} N/A}           \\
Linear Layer    & 16           & 100             \\
Tanh            & {\color[HTML]{9B9B9B} N/A}          & {\color[HTML]{9B9B9B} N/A}          \\ \bottomrule
\end{tabular}
\caption{Network structure of the diffusion function of the forward SDE in the 100-d case.}
\label{tab:sigma_gen_100}
\end{table}

\section{Ablation study}
\label{sec:ablation}
We perform a series of ablation studies of the proposed $\ngo$ algorithm for the linear and semi-linear parabolic PDEs on the number of dimensions of the PDEs and on the number of sample paths used to approximate the solutions.

\paragraph{Number of dimensions}
We investigate the influence of the number of dimensions on the performance of direct Girsanov calculation (GIR),  $\ngo$, DeepONet (DON), and compare to either the analytical solutions or solutions simulated with Euler Maruyama (E-M) using a large number of sample paths. 

Figure~\ref{fig:dim_ablation_fk} shows the results on linear parabolic PDEs. As the number of dimensions grows, the normalized errors of $\ngo$ increase for random PDEs with a defined basis and decrease in the Fokker-Planck and Black-Scholes equation. 
The figures suggest that the number of dimensions does not significantly influence the inference time of $\ngo$.

Figure~\ref{fig:dim_ablation_fb} shows the results on semi-linear parabolic PDEs. The number of dimensions does not significantly influence the normalized error of $\ngo$ on random PDEs with a defined basis and on the Black-Scholes-Barrenblatt equation. The normalized error of $\ngo$ decreases in the Hamilton-Jacobi-Bellman equation as the number of dimensions grows. 
The inference time of $\ngo$ correlates positively with the number of dimensions but is still the lowest among all methods tested.

\paragraph{Number of sample paths}
We additionally how the number of sample paths used to calculate the solutions correlates with the performance of direct Girsanov calculation, $\ngo$, DeepONet (which is uninfluenced by the number of sample paths), and compare to either the analytical solutions or solutions simulated with Euler Maruyama (E-M) and a large number of sample paths. 

Figure~\ref{fig:N_ablation_fk} shows the results on linear parabolic PDEs. 
As the number of sample paths grows, the normalized errors of $\ngo$ on all three PDEs decrease and then stabilize. 
This behavior is particularly obvious on random PDEs with a defined basis. 
Due to parallel rather than sequential computations, the number of sample paths does not significantly influence the inference time of $\ngo$.
The number of sample paths does not impact the inference time before reaching the GPU's memory limit.

Figure~\ref{fig:N_ablation_fb} shows the results on semi-linear parabolic PDEs. 
As the number of sample paths increases, the normalized error of $\ngo$ slightly decreases. 
The number of sample paths does not significantly influence the normalized errors of $\ngo$ on the Hamilton-Jacobi-Bellman equation and the Black-Scholes-Barrenblatt equation. 
The inference time of $\ngo$ first increases with the number of sample paths and then stabilizes but is still the lowest among all methods tested.

\begin{figure*}
    \centering
    \includegraphics[width=\textwidth]{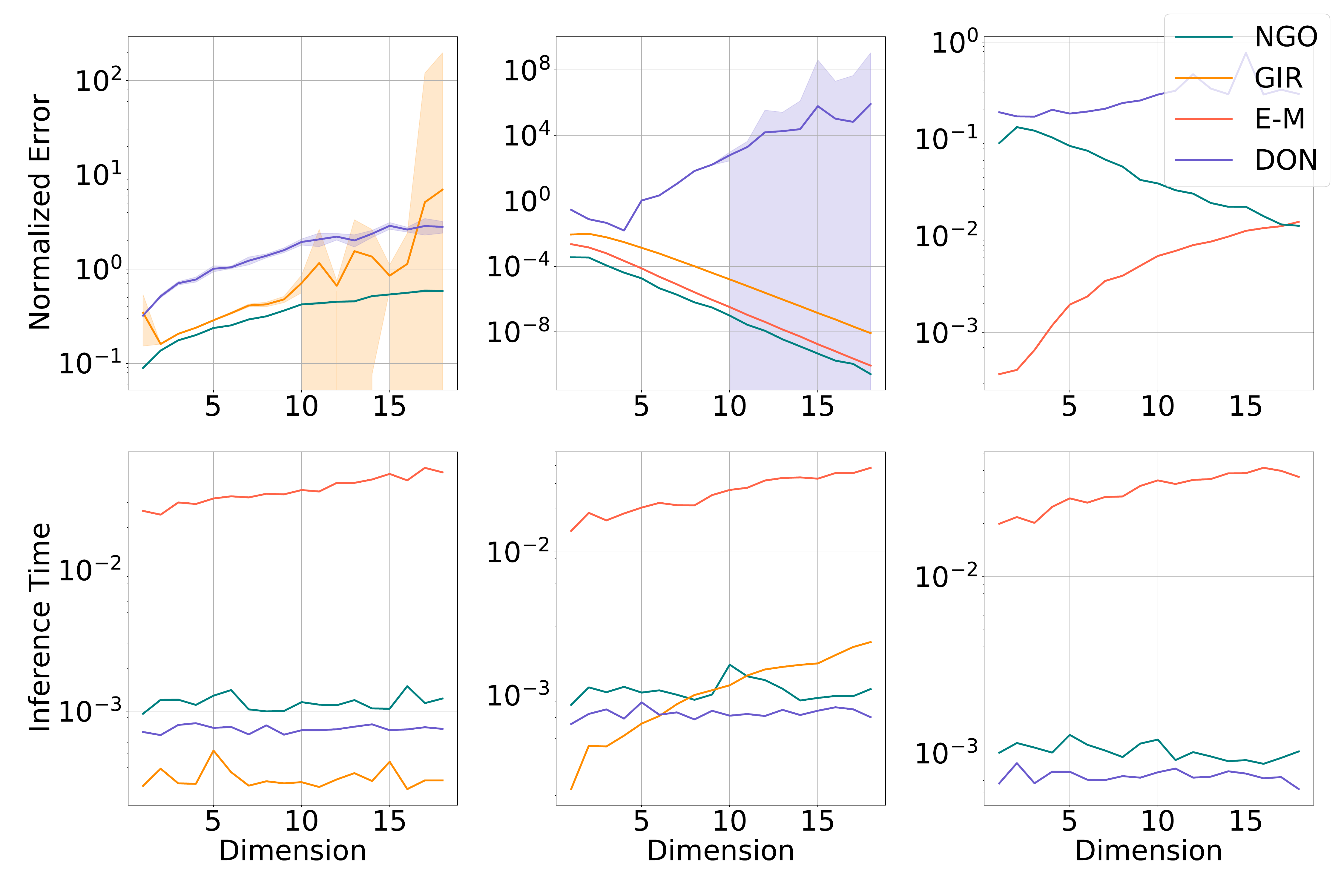}
    \caption{Ablation study on the number of dimensions of linear parabolic PDEs evaluated at the terminal time $T=0.5$. 
    We show the normalized error (top) and inference time (bottom). 
    The \textbf{first column} shows results on random PDEs with a defined basis; the \textbf{second column} shows results on the Fokker-Planck equation of the OU process; the \textbf{third column} shows results on the Black-Scholes equation.}
\label{fig:dim_ablation_fk}
\end{figure*}

\begin{figure*}
    \centering
    \includegraphics[width=\textwidth]{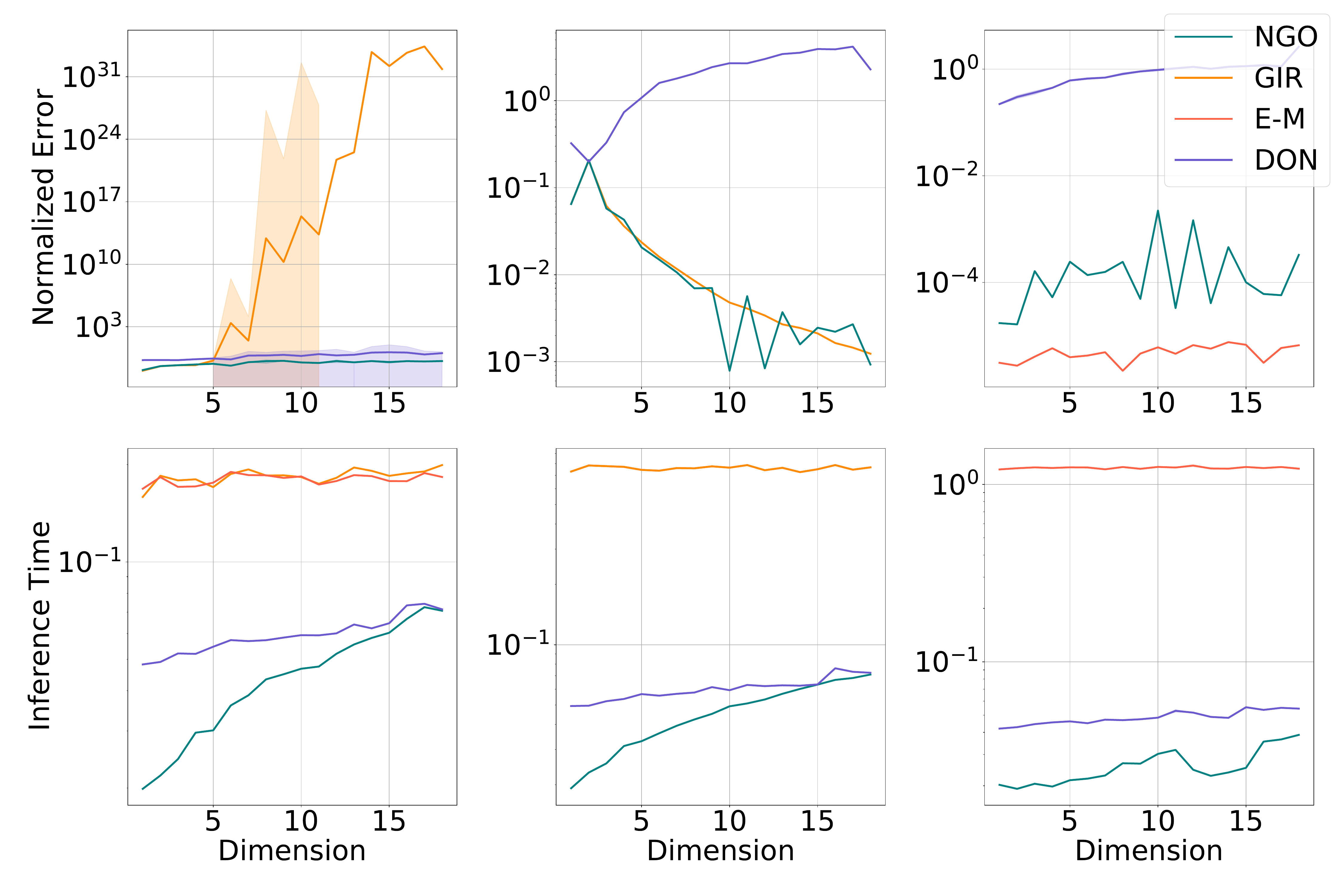}
    \caption{Ablation study on the number of dimensions of semi-linear parabolic PDEs evaluated at terminal time $T=0.5$. 
    We show the normalized error and inference time. 
    The \textbf{first column} shows results on random PDEs with a defined basis; the \textbf{second column} shows results on the Hamilton-Jacobi-Bellman equation; the \textbf{third column} shows results on the Black-Scholes-Barrenblatt equation. 
    The variance of the Girsanov calculation on the random PDEs with a defined basis increases significantly for $d > 11$.
    }
    \label{fig:dim_ablation_fb}
\end{figure*}

\begin{figure*}
    \centering
    \includegraphics[width=\textwidth]{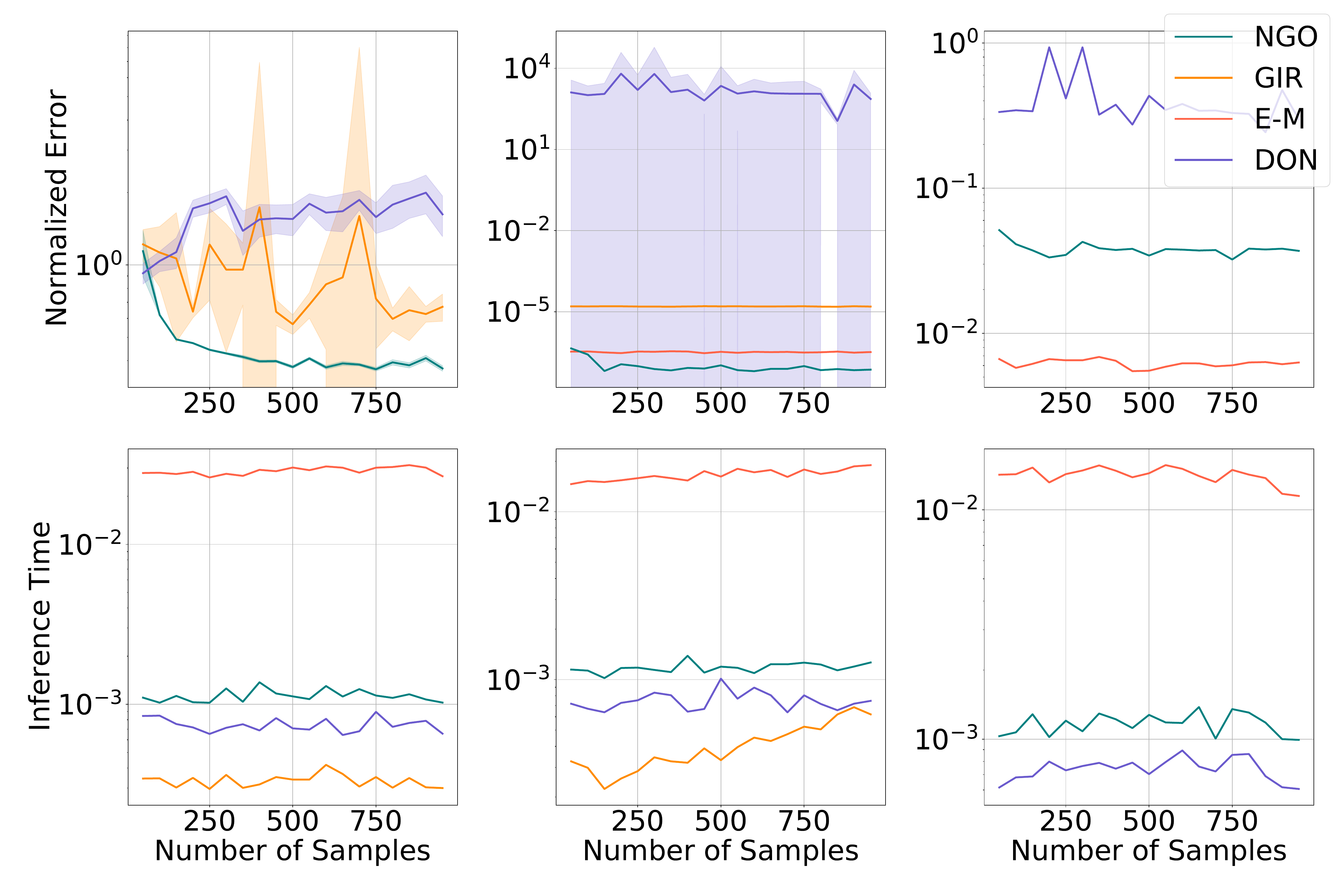}
    \caption{Ablation study on the number of sample paths used in simulating linear parabolic PDEs evaluated at terminal time $T=0.5$. 
    We show the normalized error (top) and inference time (bottom). 
    The \textbf{first column} shows results on random PDEs with a defined basis; the \textbf{second column} shows results on the Fokker-Planck equation of the OU process; the \textbf{third column} shows results on the Black-Scholes equation.}
    \label{fig:N_ablation_fk}
\end{figure*}

\begin{figure*}
    \centering
    \includegraphics[width=\textwidth]{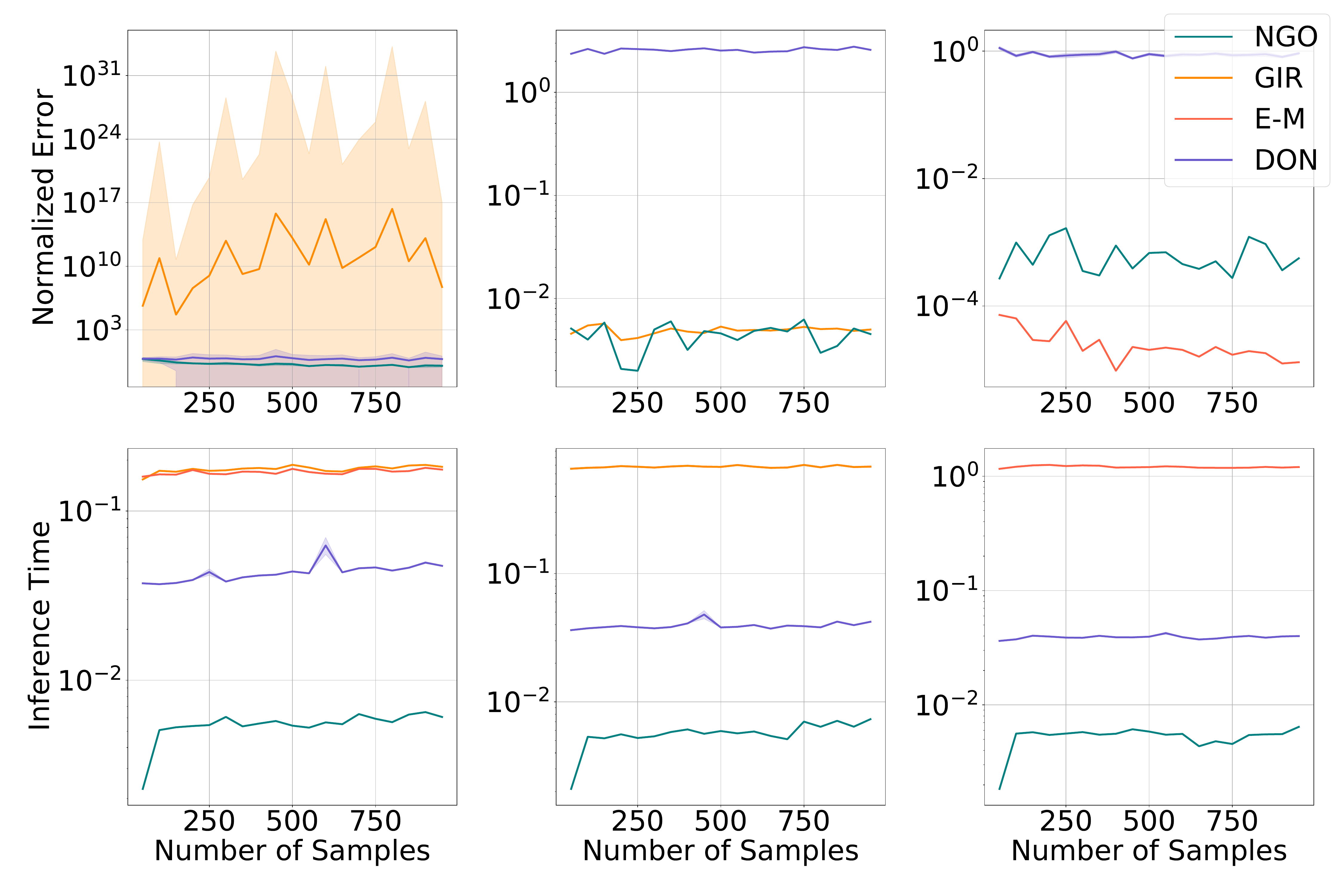}
    \caption{Ablation study on the number of sample paths used in simulating semi-linear parabolic PDEs evaluated at terminal time $T=0.5$. We show the normalized error (top) and inference time (bottom). The \textbf{first column} shows results on random PDEs with a defined basis; the \textbf{second column} shows results on the Hamilton-Jacobi-Bellman equation; the \textbf{third column} shows results on the Black-Scholes-Barrenblatt equation.}
    \label{fig:N_ablation_fb}
\end{figure*}

\section{Definitions of canonical PDEs}
\label{sec:canonical}

\subsection{Fokker Planck equation}
We study the PDF of a $d$-dimensional time-invariant linear SDE of the form:
\[\begin{cases}
    &\d X_t = X_t \d t+\d W_t,\\
    &X_0\sim \mathcal{N}(0,I).
\end{cases}\]
$X_t$ follows a Gaussian distribution for all $t$: $X_t\sim\mathcal{N}(m(t),c(t))$, where $m(t)$ and $c(t)$ satisfies the following ODE system:
\[\begin{cases}
    m(t)&=\exp(t-t_0)m(t_0),\\
    c(t)&=\exp(t-t_0)c(t_0)\exp(t-t_0)^T+\int_{t_0}^t\exp(t-\tau)Q\exp(t-\tau)^Td\tau,
\end{cases}\]
where $Q$ is the Brownian motion's diffusion coefficient and is assumed to be a $d$-dimensional identity matrix in our case~\citep{sarkka2019applied}. 

We assume the mean at initial time $m(t_0)=0$, and find
\[\begin{cases}
    m(t)&=0,\\
    c(t)&=(\frac{3}{2}\exp(2t)-\frac{1}{2})I.
\end{cases}\]
So we have an analytical form of $X_t$'s distribution: $X_t\sim\mathcal{N}(0,(\frac{3}{2}\exp(2t)-\frac{1}{2})I)$. 

Distribution of the SDE $X_t$ corresponds to the initial value problem (IVP) satisfying the Fokker-Planck equation
$$
\begin{cases}
    \frac{\partial p}{\partial t}&=-x\cdot\nabla p+\frac{1}{2}\mathrm{Tr}(\mathrm{Hess}_{x}p)-d p\\
    p_0(x,0)&=\frac{1}{(2\pi)^d}\exp\left(-\frac{xx^T}{2}\right),
\end{cases}
$$
where $d$ is the dimension of the SDE, $\cdot$ is inner product,  $\times$ is scalar multiplication. 

Using the Feynman Kac formula introduced in equation \ref{eq:fk}, the solution to this IVP has a stochastic representation 
\[\begin{cases}
    p(x,t) &= \exp(-d \, t)\E[p_0(X_t)],\\
    \d X_t &= X_t \d t + \d W_t,\\
    X_0 &= x.
\end{cases}\]

We apply $\ngo$ and DeepONet to solve this PDE according to the expectation and compare it with the ground truth PDF.

\subsection{Multi-dimensional Black-Scholes equation}
\label{sec:exact_bs}
We consider a multi-variate extension of the Black-Scholes model where multiple, correlated assets govern the price of a derivative. 
The price evolution of a European call under the Black–Scholes model is modeled by the expectation of the corresponding payoff function with respect to geometric Brownian motion: 
\begin{align}
\label{tvp:bs}
    \begin{cases}
        & \frac{\partial p}{\partial \tau}+\frac{\hat{\sigma}^2}{2}\sum_{i=1}^d s_i^2\frac{\partial^2 p}{\partial s_i^2}+r(s \cdot \nabla p -p)=0,\\
        & p(s,T)=\Phi(s).
    \end{cases}
\end{align}

We apply a change of variable $t=T-\tau$ and transform the terminal condition problem to an equivalent initial value problem (IVP):

\begin{align}
\label{ivp:bs}
    \begin{cases}
    & \frac{\partial p}{\partial t} = \frac{\hat{\sigma}^2}{2}\sum_{i=1}^d s_i^2\frac{\partial^2 p}{\partial s_i^2}+r(s \cdot \nabla p -p)=0,\\
    & p(s,0)=\Phi(s).
\end{cases}
\end{align}

Note that this transformation is not necessary, and the Black-Scholes equation is usually solved according to a terminal condition, but to maintain consistency with the other experiments we consider an IVP.
This IVP can be solved using the Feynman-Kac method described in equation~\ref{eq:fk}, with drift $\mu(s)=r s$ and  volatility $\sigma(s)=\hat{\sigma}s$ for some prescribed volatility coefficient $\hat{\sigma}$ and payoff $\Phi$.

To get a more accurate estimate, we simplify the terminal value problem~\ref{tvp:bs} with a change of variable by transforming it to a function of Brownian motion, as described in the main text. 
We will consider the variable $x_i = f(s_i)$ where $f(s) = \ln s + (r - \frac12 \hat{\sigma}^2)\tau $.
Since $\d s_t = r s_t \d t + \hat{\sigma} s_t  \d W_t$, applying It\^o's lemma to $f(s)$ gives us the new SDE
\begin{align*} 
\d f(s_t) &= \left (r s_t \underbrace{ \left (\frac{1}{s_t}\right )}_{\frac{\partial f}{\partial s}} + \frac12 \hat{\sigma}^2 s_t^2 \underbrace{\left (\frac{-1}{s_t^2}\right)}_{\frac{\partial f^2}{\partial s^2}} - \underbrace{\left(r - \frac12 \hat{\sigma}^2\right)}_{\frac{\partial f}{\partial t}} \right ) \d t + \hat{\sigma} s_t\left(\frac{1}{s_t}\right ) \d W_t \\
\d X_t &= \hat{\sigma} \d W_t.
\end{align*}
Now, we can consider taking expectations of $X_t$, where exact sampling is possible since it is Brownian motion.
After the transformation, the PDE now corresponds to the heat equation plus the discount factor given by $-r p$: 
\begin{align}
\label{ivp:bs_soln}
    \begin{cases}
        &\frac{\partial \Psi}{\partial t}=\frac{\hat{\sigma}^2}{2}\sum_{i=1}^N\frac{\partial^2\Psi}{\partial x_i^2} - rp,\\
        &\Psi(x,0)=p(x,0)=\Phi(\exp(x_i+(r-\frac{\hat{\sigma}^2}{2})\tau)).
    \end{cases}
\end{align}

This gives us a relationship between the PDE on the transformed variable and the original one through 
\begin{align*}
    p(s,t)=\exp(-rt)\Psi\left(\ln s-\left(r-\frac{\hat{\sigma}^2}{2}\right)(T-t),t\right).
\end{align*}
We take $\Phi(s) = \max\{ \max_{k=1\ldots d} s_k - K, 0 \}$ for the payoff function, which corresponds to the payoff for the best-asset rainbow option.

We approximate $p(s,t)$ with $\ngo$ and DeepONet, following the IVP problem~\ref{ivp:bs}, and compare to solutions calculated with the simplified IVP system~\ref{ivp:bs_soln}. Note $\ngo$ is trained with a constant diffusion term, whereas the SDE system required here has a state-dependent diffusion. The low error of $\ngo$ on this problem showcases its generality.

\subsection{Hamilton-Jacobi-Bellman equation}
The field of optimal control often requires directly or indirectly solving a terminal value problem involving a $d$-dimensional HJB equation. We study such a problem:
\[\begin{cases}
    \frac{\partial p}{\partial t} &= -\text{Tr}(\mathrm{Hess}_xp)+\|\nabla p\|^2,\\
    p(T,x) &= g(x),
\end{cases}\]
where $g(x)=\ln\left(\frac12(1+\|x\|^2)\right)$. 

This PDE has a stochastic solution given by $p(t,x)=-\ln(\mathbb{E}[\exp(-g(x+\sqrt{2}W_{T-t}))])$~\citep{raissi2019686}.

This PDE's semi-linear results in a representation as a Forward-Backward SDE (FBSDE) of the form of
\[\begin{cases}
    \d X_t &= \sqrt{2}\d W_t,\\
    X_0 &= 0,\\
    \d S_t &= \frac{\|Z_t\|^2}{2}\d t+Z_t^T \d W_t,\\
    S_T &= g(X_T),
\end{cases}\]
where $Z_t=\sqrt{2}p_x^t$. 
We apply $\ngo$ on this FBSDE system, with forward drift function $\mu(t,x)=0$, forward diffusion function $\sigma(t,x)=\sqrt{2}$, backward drift function $h(t,x,p,\sigma^T\nabla_x p)=-\frac{\|\sigma^T\nabla_x p\|^2}{2\sigma^2}$, and terminal condition $g(x)=\ln\left(\frac12(1+\|x\|^2)\right)$.
We compare the results generated with $\ngo$, DeepONet, and Girsanov with the analytical solution.

\subsection{Black-Scholes-Barrenblatt equation}
The Black-Scholes-Barenblatt (BSB) equation is a semi-linear extension to the Black-Scholes equation mentioned in section~\ref{sec:exact_bs} and models uncertainty in  volatility and interest rates under the Black-Scholes model \citet{bsb2019}.
We study a terminal value problem involving the BSB equation:

\[\begin{cases}
    p_t &= -\frac12\text{Tr}[\sigma^2\text{diag}(X_t^2)D^2p]+r(p-(Dp)^Tx),\\
    p(T,x) &= \|x\|^2.
\end{cases}\]

From~\citet{raissi2019686}, this problem has an exact solution given as
\begin{align}
    p(t,x) &= \exp((r+\sigma^2)(T-t))g(x).
\end{align}

Due to the BSB equation's semi-linear nature, one can represent its solution with an FBSDE system. 
\[\begin{cases}
    \d X_t &= \sigma\text{diag}(X_t)\d W_t, t\in[0,T],\\
    X_0 &= x_0,\\
    \d S_t &= r(S_t-Z_t^TX_t)\d t+\sigma Z_t^T\text{diag}(X_t)\d W_t, t\in[0,T),\\
    S_T &= g(X_T).
\end{cases}\]
Following the FBSDE system, we then construct the $\ngo$, where the forward drift function $\mu(t,x)=0$, the forward diffusion function $\sigma(t,x)=\sigma\text{diag}(X_t)$, the backward drift function $h(t,x,p,\sigma^T\nabla_x p) = r(p-\frac{\sigma^T\nabla_x p^T}{\sigma x})$, and the terminal condition $g(x)=\|x\|^2$. We compare the explicit solution of the BSB equation with the solutions generated by $\ngo$, DeepONet, and Girsanov.

\section{Algorithm for \texorpdfstring{$\ngo$}{Lg} of semi-linear parabolic PDEs}
\label{sec:alg_semi_linear}

We present the complete algorithm of $\ngo$ on semi-linear parabolic PDEs in algorithm~\ref{alg:semi_linear}.

\begin{algorithm}[hbt!]
\caption{Approximating semi-linear PDEs with $\ngo$} \label{alg:semi_linear}
\begin{algorithmic}[1]
    \Require $N \in \mathbb{N}$, $h \in \mathbb{R}_+$, $\mu(t, x) : \mathbb{R}_+ \times \mathbb{R}^d \to \mathbb{R}^d$, terminal time $T$, initial position $X$ 
    \State Sample $N$ Brownian motions to time $T$ starting at $X$, $\left \{X + \sqrt{k h} \varepsilon^{(i)}\right\}_{k =1\ldots T/h }^{i=1 \ldots N}$, $\varepsilon \sim \mathcal{N}(0,1)$
    \For{$i \in \{1, \ldots, N\}$} 
        \State Compute $\frac{\mathrm{d} \mathbb{P}_\mu^{(i)}}{\mathrm{d} \mathbb{P}_W} \approx \ngo^\mathrm{expmart}\left[ \left\{\mu\left( W_{k }^{(i)} \right) \right\}_{k=1}^{ T/h}, \left\{\sqrt{kh}\varepsilon^{(i)} \right\}_{k=1}^{T/h},h\right] $ 
        \State Compute $S_T^{(i)}\approx g(W_T^{(i)}) \ngo^\mathrm{expmart}_T$ and $Z_T^{(i)}=\ngo^\mathrm{grad}\left[S_T^{(i)},W_T^{(i)} \right]$ 
    \EndFor 
    \For{$k \in \{T/h, \ldots, 1\},\,i\in\{1, \ldots,N\}$} 
        \State Compute $S_{T-kh}^{(i)}=S_T^{(i)}+h(T-kh,W_{T-kh}^{(i)},S_{T-(k-1)h}^{(i)},Z_{T-(k-1)h}^{(i)}) \times h \times \ngo^\mathrm{expmart}_{T-(k-1)h}$ 
        \State Compute $Z_{T-kh}^{(i)}=\ngo^\mathrm{grad}\left[S_{T-kh}^{(i)},W_{T-kh}^{(i)} \right]$
    \EndFor 
        \Ensure Approximation of $u(T-kh,X_{T-kh})$ as $\check{u}(T-kh,X_{T-kh})=\frac1N \sum_{i=1}^N S_{T-kh}^{(i)}$
\end{algorithmic}
\end{algorithm}

\section{Other parameters for meta-learning}

In the main text, we focused on meta-learning the prior $p_\mathrm{meta}$ for the generative modeling task. 
Here we describe how to meta-learn the other parameters associated with the PDE and provide examples of use cases. 
In terms of the solution to a PDE, this corresponds to learning an optimal initial condition that satisfies all tasks. 

\subsection{Meta-learning the base \texorpdfstring{$\mu_0$}{Lg}}
In the experiments provided in the main text, we always considered sampling from $\mathbb{P}_{X_t}$ being standard Brownian motion (that is, $\mu_0 = 0$). 
However, this need not be the case. 
Consider $k$ task-specific $\{\mu_i\}_{i=1}^k$ drifts.
We can learn an optimal $\mu_0$ that minimizes the distance between all the task-specific $\mu_i$'s.
This has the effect of $\mu_i - \mu_0 \to 0$ for all $i$, which would lead to a  
However, this comes at the expense of requiring an Euler-Maruyama solve for each iteration of training since $\mu_0$ would need to change. 

\paragraph{Example: baseline policy}
Suppose our interest lies in solving the following maximization problem:
$$ 
\max_{\mu_0} \E_i \left\{\E_{\mu_i - \mu_0}[J(X_T)] \right \},
$$
where we compute the inner expectation over an objective function and the outer expectation over various tasks $i$ with distinct drift functions $\mu_i$.
This maximization problem could, for example, relate to the maximization of a portfolio under $k$ different market conditions.
The meta-learned parameter $\mu_0$ then describes the optimal policy in all $k$ market conditions. 
We rewrite this maximization problem with using Girsanov's theorem as 
$$
\max_{\mu_0} \E_i \left\{\E_{ -\mu_0}\left[J(X_T) \exp \left ( \int_0^T \mu_i \d W_t - \frac12 \int_0^T \mu_i^2 \d t \right ) \right] \right \},
$$
leading to a similar meta-learning problem as described in the main text.

\subsection{Meta-learning the base \texorpdfstring{$\sigma_0$}{Lg}}

In addition to optimizing for the base drift, $\mu_0$, we can also consider optimizing for a baseline $\sigma_0$. Diffusion models utilize similar concepts by learning $\sigma$ values for known SDEs. However, due to challenges with sampling state-dependent diffusion, it is more convenient to consider a function linear in $X_t$. In the following example, we will delve into this issue further.

\paragraph{Example: baseline volatility}

Suppose our interest lies in sampling from a generative model with distributions that satisfy a Fokker-Planck equation. In order to sample from all target distributions optimized for the considered set of distributions, we can estimate a baseline volatility. Score-based generative modeling applies similar concepts, typically using an affine SDE as a base model that is adapted for different target distributions.
To solve the same maximization problem we previously discussed, we can now introduce a parameter $\sigma(t)$ shared among all sample paths for different tasks. We express the problem as:
$$
\max_{\sigma_\star(t)} \E_i \left \{ \E_{\mu_0} \log p_0(Y_T) + \int \sigma^{-1}(t) \mu_i \d W_t - \frac12 \int \mu_i^T \sigma^{-1}(t) \mu_i  \d t  \mid Y_0 \sim p_i(Y) \right \}.
$$

\paragraph{Extension to state-dependent volatility}
The primary focus of this study is situations where $\mu$ represents different parameters of a partial differential equation's (PDE) solution or various tasks within a meta-learning framework. We also examine the instances where the volatility, denoted by $\sigma$, changes. We limit our consideration to those volatilities that fulfill the conditions set by the multivariate \emph{Lamperti transform}~\citep[Proposition 1]{ait2008closed}.

The Lamperti transform imposes specific constraints on the partial derivatives, which come from Itô's Lemma in~\ref{lem:ito}. This property enables converting the corresponding diffusion into a form with unit volatility. 

Assuming the existence of a function $f$ such that $\sigma = \nabla_x f( \, \cdot \,)$, the PDE can be solved for various parameters using the following approximation:

$$
\mathbb{E}_{\mathbb{P}_{X, \sigma}}[p_0(X_T) \mid \mathcal{F}_T] \approx \mathbb{E}_{\mathbb{P}_Y}\left [p_0(Y_t)\ngo\left(\{\mu(f(Y_{s_n})), \Delta f( W_{s_n}), h \}_{n_T=1}^{N_T} ; \theta\right ) \; \Big | \; \mathcal{F}_T\right ].
$$

Finally, the operator approximates the integral, with the additional component contributed by the trace of the Hessian, as outlined in equation~\eqref{eq:ito_lemma}.

\section{Maximization problems}
\label{sec:max}
We presented the main motivation of the maximization problem in terms of the Fokker-Planck equation under different types of target distributions.

\paragraph{Maximizing value functions}
Consider an example where we wish to obtain the policy that maximizes a certain utility function. 
This situation arises, for instance, in a portfolio optimization problem where we assume a particular stochastic differential equation (SDE) governs a vector of assets $S_t$, and we aim to find the policy $\pi^\star$ that maximizes the utility $J$ across $K$ different market scenarios. 
In other words, we seek the meta-parameter $\mu_\mathrm{meta} = \pi^\star$ that maximizes the utility across the various scenarios, with each scenario specified by the drift $\mu^{(i)}$.

This maximization problem is expressed as $\max_{\pi} \sum_{i=1}^{K} \E_{\mu^{(i)}}[J(S_T)]$. 
To tackle this problem, we employ the same Monte Carlo approach used in the generative modeling case. 
Assuming a linear interaction between the policy and the assets, we once again apply Jensen's inequality to maximize the expectation of the logarithm.
The evidence lower bound (ELBO) in this case is given by:
$$
\max_{\pi} \sum_{i=1}^{K} \E_{\pi} \left [J(S_T) + \int_0^T \mu^{(i)}(S_t) \, \d W_t - \frac12 \int_0^T \left (\mu^{(i)}(S_t)\right)^2 \, \d t \right ].
$$
This formulation eliminates the need to recompute sample paths, as only the sample path corresponding to $\pi$ requires computation.

\paragraph{Example: Meta-learning $p_0$ for $K$ tasks}
Continuing with our example from the main text, we assume that all tasks have some relationship to each other, and we aim to leverage these relationships to enhance the performance of the sampling task compared to individual training. 
We consider the initial distribution, $p_0$, as a meta-learned parameter, which we can represent using a parametric model, $p_\mathrm{meta}(\,\cdot\,; \theta) \equiv p_0$.

When integrated with the forward stochastic differential equation (SDE), we need to solve for:

\begin{align*}
   \max_{p_\mathrm{meta}} &\sum_{i=1}^K  \max_{\mu_i} \text{ELBO}_{\mathrm{IS}}(\{X^{(i)}\}; \mu_i) = \\  
   \max_{p_\mathrm{meta}} & \sum_{i=1}^K  \max_{\mu_i} \mathbb{E}\left[\int_0^T\mu_i(Y_{s},s)\,\mathrm{d}W_s -\frac{1}{2}\int_0^T\mu_i^2(Y_{s},s) - \nabla \cdot \mu_i(Y_s, s) \,\mathrm{d}s + \log p_\mathrm{meta}(Y_T) \right].
\end{align*}

By maximizing this expression over all $p_i$ that we want to approximate, with a corresponding $\mu_i$ for each $p_i$, we can use the associated information of each $p_i$ in the form of $p_\mathrm{meta}$. It is important to note that the same derivation holds if we compute the expectation over full sample paths without employing importance sampling:

\begin{align*}
    \max_{p_\mathrm{meta}}\sum_{i=1}^K \max_{\mu_i} \mathrm{ELBO}_{\mathrm{direct}} (\{X^{(i)}\}; \mu_i) =  \max_{p_\mathrm{meta}}\sum_{i=1}^K \max_{\mu_i} & \mathbb{E}\left[\int_0^T\nabla\cdot\mu_i (X_{s}^{(i)},s)\,\mathrm{d}s + \log p_\mathrm{meta}\left(X_T^{(i)}\right)\right].
\end{align*}

However, in the case of $\mathrm{ELBO}_\mathrm{direct}$, the sample paths need to be recomputed for each iteration since they depend on the parameters of the drift, whereas the simpler model under $\mathbb{P}_{Y_t}$ needs to be resampled for each iteration of the importance sampling-based method. 
The direct case requires sequential computation at each time, whereas the computation in $\mathrm{ELBO}_{\mathrm{IS}}$ can be parallelized.

\end{document}